\documentclass{article}

\usepackage{microtype}
\usepackage{graphicx}
\usepackage{subcaption}
\usepackage{booktabs} %
\usepackage{afterpage} %
\usepackage{multirow} %
\usepackage{placeins}

\usepackage{hyperref}

\usepackage[preprint]{icml2026}

\usepackage{amsmath}
\usepackage{amssymb}
\usepackage{mathtools}
\usepackage{amsthm}

\usepackage[capitalize,noabbrev]{cleveref}

\usepackage{amsbsy}
\usepackage{esint}
\usepackage{bbm}
\usepackage{nicefrac}
\usepackage{tikz}
\usetikzlibrary{arrows.meta, positioning}

\let\Pr\undefined

\newcommand{\E}{\mathbb{E}}
\newcommand{\mcD}{\mathcal{D}}

\newcommand{\Pr}{\ensuremath{\mathrm{P}}}

\def\ddefloop#1{\ifx\ddefloop#1\else\ddef{#1}\expandafter\ddefloop\fi}
\def\ddef#1{\expandafter\def\csname bb#1\endcsname{\ensuremath{\mathbb{#1}}}}
\ddefloop ABCDEFGHIJKLMNOPQRSTUVWXYZ\ddefloop
\def\ddef#1{\expandafter\def\csname b#1\endcsname{\ensuremath{\mathbf{#1}}}}
\ddefloop ABCDEFGHIJKLMNOPQRSTUVWXYZ\ddefloop
\def\ddef#1{\expandafter\def\csname c#1\endcsname{\ensuremath{\mathcal{#1}}}}
\ddefloop ABCDEFGHIJKLMNOPQRSTUVWXYZ\ddefloop

\theoremstyle{plain}
\newtheorem{theorem}{Theorem}[section]
\newtheorem{proposition}[theorem]{Proposition}

\theoremstyle{definition}

\newtheorem{assumption}[theorem]{Assumption}
\theoremstyle{remark}

\newtheorem{observation}[theorem]{Observation}

\icmltitlerunning{Robust Post-Training for Generative Recommenders: Why Exponential Reward-Weighted SFT Outperforms RLHF}

\begin{document}

\twocolumn[
  \icmltitle{Robust Post-Training for Generative Recommenders: Why Exponential Reward-Weighted SFT Outperforms RLHF}

  \icmlsetsymbol{equal}{*}

  \begin{icmlauthorlist}
    \icmlauthor{Keertana Chidambaram}{yyy}
    \icmlauthor{Sanath Kumar Krishnamurthy}{equal,comp1}
    \icmlauthor{Qiuling Xu}{equal,comp2}
    \icmlauthor{Ko-Jen Hsiao}{comp2}
    \icmlauthor{Moumita Bhattacharya}{comp2}
  \end{icmlauthorlist}

  \icmlaffiliation{yyy}{Department of Management Science \& Engineering, Stanford University, CA, USA}
  \icmlaffiliation{comp1}{Meta, CA, USA}
  \icmlaffiliation{comp2}{Netflix Research, CA, USA}

  \icmlcorrespondingauthor{Keertana Chidambaram}{vck@stanford.edu}

  \icmlkeywords{Generative Recommenders, Post-training}
  \vskip 0.3in
]

\printAffiliationsAndNotice{\icmlEqualContribution Work done while KC was an intern at Netflix Research.}

\newcommand{\qiuling}[1]{%
    \textcolor{teal}{\textbf{Qiuling}}\textcolor{red}{\textit{[#1]}}%
}

\begin{abstract}
  Aligning generative recommender systems to user preferences via post-training is critical for closing the gap between next-item prediction and actual recommendation quality. Existing post-training methods are ill-suited for production-scale systems: RLHF methods reward hack due to noisy user feedback and unreliable reward models, offline RL alternatives require propensity scores that are unavailable, and online interaction is infeasible. We identify exponential reward-weighted SFT with weights $w = \exp(r/\lambda)$ as uniquely suited to this setting, and provide the theoretical and empirical foundations that explain why. By optimizing directly on observed rewards without querying a learned reward model, the method is immune to reward hacking, requires no propensity scores, and is fully offline. We prove the first policy improvement guarantees for this setting under noisy rewards, showing that the gap scales only logarithmically with catalog size and remains informative even for large item catalogs. Crucially, we show that temperature $\lambda$ explicitly and quantifiably controls the robustness-improvement tradeoff, providing practitioners with a single interpretable regularization hyperparameter with theoretical grounding. Experiments on three open-source and one proprietary dataset against four baselines confirm that exponential reward weighting is simple, scalable, and consistently outperforms RLHF-based alternatives.

\end{abstract}

\section{Introduction}
The success of transformer architectures has led to the rise of generative recommenders. This paradigm treats recommendations as a sequential generation problem where recommendations are generated sequentially based on the user's interaction history \cite{yang2025gr, deldjoo2024review, wu2024survey}, analogous to how Large Language Models (LLMs) predict the next token given a sequence of text \cite{rajput2023recommender, zhai2024actions, kang2018self, deng2025onerec}. Vanilla generative recommender architectures such as SASRec \cite{kang2018self}, HSTU \cite{zhai2024actions}, and OneRec \cite{deng2025onerec} model user behavior through large-scale behavior cloning. However, purely imitating users through behavior cloning leads to indiscriminate mimicking, where the model learns high-value engagements (like items they enjoyed) and low-value engagements (like accidental clicks or click-bait interactions) with equal importance. 

Motivated by recent advances in Reinforcement Learning from Human Feedback (RLHF) \cite{ouyang2022training, stiennon2020learning}, this paper explores post-training alignment of generative recommenders with rich user feedback. The core insight from RLHF in language models is that post-training optimization with preference data can significantly improve model quality beyond what behavior cloning alone achieves. This paradigm is particularly promising for recommendation systems, where massive organic user feedback is naturally available at scale—including implicit signals such as watch time, completion rates, and re-engagement patterns, as well as explicit signals such as ratings and reviews. By leveraging this abundant feedback to move beyond pure behavior cloning, we can potentially train recommenders that distinguish high-value from low-value engagements and explore beyond the confines of observed user trajectories. However, adapting RLHF techniques from language models to generative recommenders introduces unique challenges due to the following factors.

\begin{itemize}
    \item \textbf{Reward Model Unreliability.} Item representations in generative recommenders are learned purely from behavioral data with no semantic grounding. Since users interact with only a small fraction of the catalog \cite{dulac2015deep}, a reward model must extrapolate over the vast majority of items from sparse supervision, a generalization problem that grows severe at scale. During policy optimization, the policy exploits these extrapolation errors, systematically selecting items for which the reward model is overoptimistic rather than those that truly maximize user satisfaction.

    \item \textbf{Offline Learning Constraints.} In industrial settings the learning dataset is pre-collected and static, making interactive feedback loops infeasible. RLHF \cite{ouyang2022training, stiennon2020learning} requires a reward model as a simulator. DPO \cite{rafailov2023direct} avoids this but requires binary preference pairs, whereas recommendation feedback is scalar. In practice, overlapping interactions covering the full catalog are too sparse to construct such pairs without again relying on a learned reward model.

    \item \textbf{Lack of Logging Policy.} The offline dataset exhibits selection bias, as rewards are only observed for actions taken by the logging policy. While Inverse Propensity Scoring (IPS) could theoretically correct this, the logging policy is often too complex \cite{covington2016deep} and inaccessible to estimate \cite{liang2022local}, and IPS weights suffer from extreme variance \cite{swaminathan2015batch, dudik2011doubly}, particularly when the logging policy is nearly deterministic.
\end{itemize}

To overcome these challenges, we propose Exponential Reward-Weighted SFT (Exp-RSFT), which weights training examples by $\exp(r/\lambda)$ using only observed rewards. By never querying a learned reward model, the method sidesteps reward model unreliability at its root. It cannot reward hack, requires no propensity scores, and is fully offline. While exponential reward weighting has appeared in prior work \cite{peters2007using, wang2018exponentially, liang2022local}, its suitability for large-scale generative recommenders has not been theoretically justified nor empirically validated at production scale. We make the following contributions:
\begin{itemize}
    \item \textbf{Reward models fail catastrophically in the generative recommender setting.} Reward models must extrapolate over a vast item catalog from sparse observations, making them unreliable surrogates for true user preferences. We show empirically that the learned reward model fails to outperform naive item-mean prediction, yet PPO and DPO over-optimize for it, collapsing catastrophically on true recommendation metrics.
    \item \textbf{Temperature $\lambda$ provably controls the robustness-improvement tradeoff under noisy rewards.} Observed rewards are inherently noisy even without a learned reward model. We prove policy improvement guarantees under noisy rewards with only logarithmic dependence on catalog size, and a closed-form characterization of the robustness-improvement tradeoff controlled by $\lambda$. The $\lambda$ sweep produces a consistent inverted-U performance curve across all datasets, empirically validating the theory.
    \item \textbf{Exp-RSFT consistently outperforms all baselines across four datasets.} Exp-RSFT consistently outperforms all baselines across three public benchmarks and a large-scale proprietary dataset. Gains over PPO and DPO stem from avoiding the reward model entirely; gains over behavior cloning and linear reward-weighted SFT isolate $\lambda$ as the key ingredient, which provides explicit control over re-ranking aggressiveness and robustness to noise in a single interpretable parameter.
\end{itemize}

\section{Related Literature}

\begin{figure}[t]
  \vskip 0.2in
  \begin{center}    \centerline{\includegraphics[width=\columnwidth]{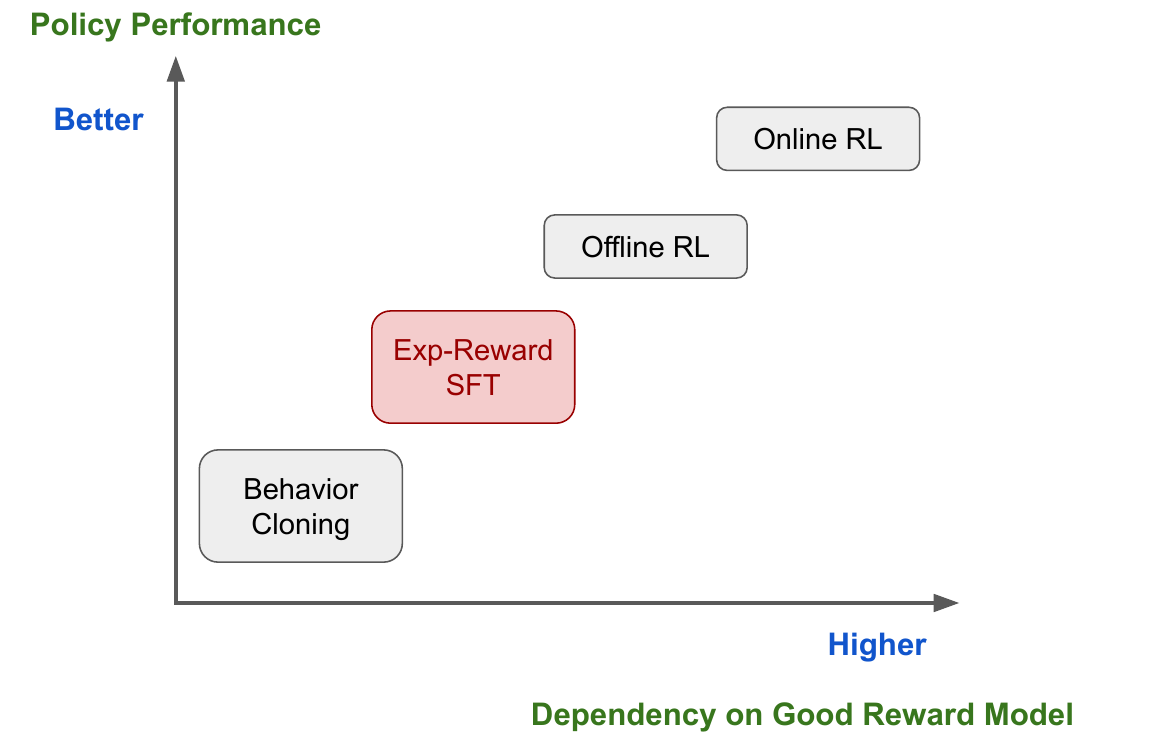}}
    \caption{
      Comparison of policy learning methods along two axes: best achievable policy performance and dependency on a good reward model.
    }
    \label{fig:tradeoff}
  \end{center}
  \vskip -0.2in
\end{figure}

Reinforcement Learning (RL) based post-training methods for generative models span a spectrum of reward model dependency and achievable performance (\cref{fig:tradeoff}) that typically includes Online RL, Offline RL and Behavior Cloning. Our method sits the space between Offline RL and Behavior Cloning, where it leverages user preferences but in a limited manner. We review the related approaches and position our method within this landscape. 

Standard post-training methods for generative models include RLHF \cite{ouyang2022training, stiennon2020learning}, DPO \cite{rafailov2023direct}, and GRPO \cite{shao2024deepseekmath}. RLHF optimizes a policy via PPO against a learned reward model. GRPO follows the same paradigm but replaces the learned value function baseline with a group average reward. Both could in principle be applied to generative recommender post-training, but the learned reward model is the central bottleneck: in recommendation settings, reward models generalize poorly across the full item catalog. Any method that queries a learned reward model at training time inherits this generalization failure, regardless of the policy optimization strategy used on top. DPO sidesteps reward model optimization but requires binary preference pairs. Recommendation feedback such as ratings, watch time, and reviews are scalar in nature, and obtaining pairwise comparisons for unobserved items still requires a learned reward model, reintroducing the generalization problem. Offline RL methods like CQL \cite{kumar2020conservative} offer principled alternatives by learning conservative Q-functions, but these must also generalize across the massive item space and face the same challenges. Doubly robust methods \cite{dudik2011doubly} and counterfactual risk minimization \cite{swaminathan2015batch} require inverse propensity scores, which are intractable to compute for complex production pipelines.

Our method is closely related to weighted behavior cloning algorithms that use monotonic functions of rewards as weights \cite{peters2007using, wang2018exponentially, liang2022local, mukherjeeoffline, kostrikov2021offline, wang2020critic}. Like our approach, these methods operate on offline data and do not require propensity scores. Most notably, \cite{peters2007using} introduces the same softmax reward weighting scheme, though their analysis focuses on training stability. We focus on a different perspective: statistical guarantees under noisy rewards and the algorithm's suitability for post-training generative recommenders. MARWIL \cite{wang2018exponentially}, IQL \cite{kostrikov2021offline}, CRR \cite{wang2020critic}, and LPI \cite{liang2022local} weight log probabilities by $\exp(\text{A}/\lambda)$ where $A$ is the advantage function. In the bandit setting, weighting by $\exp(r/\lambda)$ optimizes the same objective, as the state-dependent value function in the advantage cancels within the softmax \cite{liang2022local}. This equivalence makes explicit advantage estimation unnecessary in our case, a significant departure from methods like IQL that require learning separate $Q$ and $V$ functions via expectile regression.

\citet{mukherjeeoffline} considers a simpler variant that weights linearly by rewards. This similarly avoids reward model dependence, but the resulting policy is proportional to the product of reward and logging frequency, conflating item quality with popularity. Additionally, linear reward weighting is not robust to noisy observed rewards. In contrast, weighting by $\exp(r/\lambda)$ introduces $\lambda$ as an intrinsic regularizer that both controls sensitivity to reward noise and decouples item quality from logging frequency: as $\lambda$ decreases, the policy concentrates on high-reward items regardless of their original popularity. \citet{mukherjeeoffline} also relies on standardization to control variance; in our method, standardization is unnecessary as additive shifts cancel in the softmax and tuning $\lambda$ achieves the same effect as scaling. Methods that weight linearly with rewards or advantages face an additional challenge: negative weights lead to negative log likelihoods. In recommendation settings this is particularly harmful, as decreasing probability on observed items reallocates mass to unobserved items, which typically have lower true preference due to selection bias \cite{steck2010training, schnabel2016recommendations}, thereby leading to worse performance.

Reward-weighted methods have also gained traction for LLM post-training, notably RLOO \cite{ahmadian2024back} and ALoL \cite{baheti2023leftover}. However, ALoL requires propensity scores and a learned value function for computing advantages. RLOO generates multiple outputs and scores each with a reward model, weighting by the reward relative to the average across generations. Since we cannot generate new recommendations and observe their rewards, RLOO requires reward model predictions, reintroducing generalization issues. It also inherits the negative log-likelihood problem since rewards relative to the average can be negative.

\section{Background}

\subsection{Problem Setting}
We model recommendation as a contextual bandit with environment $\mathcal{M} = (\mathcal{S}, \mathcal{A}, r, d_0),$ where $\mathcal{S}$ is the context space, $\mathcal{A}$ the action space, $r : \mathcal{S} \times \mathcal{A} \rightarrow \mathbb{R}$ the reward function, and $d_0$ the context distribution. In the generative recommender setting, $s \in \mathcal{S}$ represents a user’s interaction history and $a \in \mathcal{A}$ the next item to recommend. We assume access to an offline dataset $\mathcal{D} = \{(s_i, a_i, r_i)\}_{i=1}^N$ collected under an unknown logging policy or mixture of policies. Since our objective is to maximize immediate reward, we model each recommendation independently as a contextual bandit. For a stochastic policy $\pi(a \mid s)$, the value and advantage functions are $V^{\pi}(s) = \mathbb{E}_{a \sim \pi(\cdot \mid s)} [ r(s, a)]$ and $A^{\pi}(s, a) = r(s, a) - V^{\pi}(s)$. 

\subsection{Exponential Reward-Weighted Policy Optimization}\label{sec:exp_rw}

Our policy optimization objective is the same as in \citet{nair2020awac}. The original arguments in \citet{nair2020awac} can be extended to obtain our algorithm, we re-state it here for completeness. 

At each iteration $k$, we seek a new policy $\pi_{k+1}$ that maximizes expected advantage under the behavior policy's state visitation distribution, while remaining close to the data-generating distribution $\pi_\beta$:
\begin{align}\label{eq:constrained_opt}
&\pi_{k+1} = \arg\max_{\pi} \E_{s \sim d^{\pi_\beta}, a \sim \pi(\cdot|s)} \left[ A^{\pi_k}(s, a) \right] \quad \\
&\text{s.t.} \quad \E_{s \sim d^{\pi_\beta}} \left[ D_{\mathrm{KL}}\left( \pi(\cdot|s) \| \pi_\beta(\cdot|s) \right) \right] \leq \epsilon,
\end{align}
where $\pi$ must be a valid policy satisfying $\sum_a \pi(a|s) = 1$ and $\pi(a|s) \geq 0$ for all $s, a$.

The Lagrangian of this constrained optimization problem is:
\begin{align}
\mathcal{L}(\pi, \lambda) &= \E_{s \sim d^{\pi_\beta}} \left[ \E_{a \sim \pi(\cdot|s)} \left[ A^{\pi_k}(s, a) \right] \right. \nonumber \\
&\quad \left. - \lambda D_{\mathrm{KL}}\left( \pi(\cdot|s) \| \pi_\beta(\cdot|s) \right) \right] + \lambda \epsilon.
\end{align}

Taking the functional derivative with respect to $\pi$ and setting it to zero yields the closed-form solution:
\begin{equation}
\pi^*(a|s) = \frac{1}{Z(s)} \pi_\beta(a|s) \exp\left( \frac{1}{\lambda} A^{\pi_k}(s, a) \right),
\label{eq:optimal_nonparametric}
\end{equation}
where $Z(s) = \sum_{a'} \pi_\beta(a'|s) \exp\left( \frac{1}{\lambda} A^{\pi_k}(s, a') \right)$ is the state-dependent partition function. This is the algoritm presented in the paper \citet{nair2020awac}. But we can simplify this further for our setting.

In a contextual bandit formulation, the advantage decomposes as $A^{\pi}(s, a) = r(s, a) - V^{\pi}(s)$, where the baseline $V^{\pi}(s)$ depends only on the state. Substituting into Equation~\eqref{eq:optimal_nonparametric}:
\begin{align}
\pi^*(a|s) &= \frac{1}{Z(s)} \pi_\beta(a|s) \exp\left( \frac{r(s,a) - V^{\pi_k}(s)}{\lambda} \right) \\
&= \frac{1}{Z(s)} \pi_\beta(a|s) \exp\left( \frac{r(s,a)}{\lambda} \right) \exp\left( \frac{-V^{\pi_k}(s)}{\lambda} \right).
\end{align}
Since $\exp\left( \frac{-V^{\pi_k}(s)}{\lambda} \right)$ is constant with respect to actions, it cancels out along with the partition function $Z(s)$:
\begin{equation}
\pi^*(a|s) = \frac{\pi_\beta(a|s) \exp\left( \frac{r(s,a)}{\lambda} \right)}{\sum_{a'} \pi_\beta(a'|s) \exp\left( \frac{r(s,a')}{\lambda} \right)}.
\label{eq:reward_weighted_policy}
\end{equation}

\begin{observation}[Baseline and Scale Invariance]
\label{obs:invariance}
In the contextual bandit setting, the optimal policy in Equation~\eqref{eq:reward_weighted_policy} exhibits two invariances:
\begin{enumerate}
    \item \textbf{Baseline invariance:} For any state-dependent function $b(s)$, adding it to the rewards leaves the optimal policy unchanged:
    \begin{align*}
    \pi^*(a|s) &\propto \pi_\beta(a|s) \exp\left( \frac{r(s,a) + b(s)}{\lambda} \right) \\
    &\propto \pi_\beta(a|s) \exp\left( \frac{r(s,a)}{\lambda} \right).
    \end{align*}
    Since $\exp(b(s)/\lambda)$ is constant with respect to $a$, it cancels in the normalization. Consequently, the policy depends only on rewards $r(s,a)$, not advantages $A^\pi(s,a) = r(s,a) - V^\pi(s)$. This eliminates the need to estimate a value function, removing a source of approximation error.
    \item \textbf{Scale invariance:} For any constant $c > 0$, scaling rewards by $c$ is equivalent to rescaling temperature: $\exp(c \cdot r / \lambda) = \exp(r / (\lambda/c))$. Thus, $\lambda$ subsumes reward scaling, and we may use unnormalized rewards without loss of generality.
\end{enumerate}
\end{observation}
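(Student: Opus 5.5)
The plan is to verify both invariances directly from the closed form in Equation~\eqref{eq:reward_weighted_policy}. The only tools needed are that $\exp$ turns sums into products and that the normalization runs over actions $a'$ with the state $s$ held fixed. No optimization argument is required, because Equation~\eqref{eq:reward_weighted_policy} has already been derived as the maximizer of the Lagrangian. The one subtlety worth making explicit is that the observation concerns the optimal \emph{policy}, not the value of the objective. Shifting rewards by $b(s)$ does change the objective value, since $\E_{a\sim\pi}[r+b]=\E_{a\sim\pi}[r]+b(s)$, but the shift does not depend on $\pi$ once $s$ is fixed. So the argmax is unchanged, and this agrees with the closed-form computation.

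\emph{Baseline invariance.} For the shifted rewards $\tilde r(s,a)=r(s,a)+b(s)$, I will write out both numerator and denominator of Equation~\eqref{eq:reward_weighted_policy}. Factoring gives $\exp(\tilde r(s,a)/\lambda)=\exp(r(s,a)/\lambda)\exp(b(s)/\lambda)$, and the same factorization applies inside the sum over $a'$. Because $b(s)$ does not depend on $a'$, the factor $\exp(b(s)/\lambda)$ comes out of the sum. It is strictly positive and finite, so it cancels between numerator and denominator, and the policy is identical to the one built from $r$. Taking $b(s)=-V^{\pi_k}(s)$ recovers exactly the step that turned Equation~\eqref{eq:optimal_nonparametric} into Equation~\eqref{eq:reward_weighted_policy}. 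This shows that advantages and raw rewards give the same policy, so no value function needs to be estimated.

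\emph{Scale invariance.} For $c>0$ I will note the identity $c\,r/\lambda=r/(\lambda/c)$. Hence the policy built from rewards $c\,r$ at temperature $\lambda$ coincides pointwise with the policy built from $r$ at temperature $\lambda'=\lambda/c>0$. Since $\lambda$ ranges over all of $(0,\infty)$ and the map $\lambda\mapsto\lambda/c$ is a bijection of $(0,\infty)$, the family of policies obtainable by sweeping $\lambda$ is the same for $r$ and for $c\,r$. This is the sense in which tuning $\lambda$ subsumes reward normalization.

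Neither part poses a real obstacle. The point needing care is well-definedness: I need $\pi_\beta(\cdot\mid s)$ to have nonzero total mass, and the partition sum must be finite, which holds for a finite catalog or bounded rewards. Under those conditions the cancelled factor is a finite positive constant, and the argument goes through.
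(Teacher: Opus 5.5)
Your proposal is correct and follows the paper's own argument. For baseline invariance you factor $\exp(b(s)/\lambda)$ out of both the numerator and the partition sum so that it cancels, and for scale invariance you use the identity $c\,r/\lambda = r/(\lambda/c)$. Your extra remarks (taking $b(s)=-V^{\pi_k}(s)$, the bijection $\lambda\mapsto\lambda/c$ on $(0,\infty)$, and finiteness and positivity of the partition sum) make explicit what the paper leaves implicit.
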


Together, these properties yield a simple algorithm with a single interpretable hyperparameter $\lambda$ that controls regularization strength. As in \citet{nair2020awac}, in order to obtain a parametric policy $\pi_\theta$ (where $\theta$ denotes the policy parameters), we project the non-parametric solution onto the policy class by minimizing the KL divergence:
\begin{equation}
\theta_{k+1} = \arg\min_{\theta} \E_{s \sim d^{\pi_\beta}} \left[ D_{\mathrm{KL}}\left( \pi^*(\cdot|s) \| \pi_\theta(\cdot|s) \right) \right].
\end{equation}

Expanding the KL divergence and dropping terms independent of $\theta$:
\begin{align}
\theta_{k+1} &= \arg\max_{\theta} \E_{s \sim d^{\pi_\beta}} \left[ \E_{a \sim \pi^*(\cdot|s)} \left[ \log \pi_\theta(a|s) \right] \right] \\
&= \arg\max_{\theta} \E_{s \sim d^{\pi_\beta}, a \sim \pi_\beta(\cdot|s)} \left[ \frac{\pi^*(a|s)}{\pi_\beta(a|s)} \log \pi_\theta(a|s) \right] \\
&= \arg\max_{\theta} \E_{(s,a) \sim \mcD} \left[ \exp\left( \frac{r(s, a)}{\lambda} \right) \log \pi_\theta(a|s) \right],
\label{eq:rw_objective}
\end{align}
where we used importance sampling to convert the expectation over $\pi^*$ to an expectation over $\pi_\beta$, and dropped the partition function $Z(s)$.

Equation~\eqref{eq:rw_objective} corresponds to a weighted maximum likelihood objective where actions from the offline dataset are weighted by exponentiated scaled rewards, yielding Algorithm~\ref{alg:exp-rw-sft}. The algorithm is simple to implement via standard SFT APIs and requires neither a reward model, importance sampling, nor knowledge of the logging policy $\pi_\beta$. The only hyperparameter is $\lambda$, which also implicitly controls regularization strength. In the next section, we analyze the robustness of this method to noise in $r$, providing both a high-probability policy improvement guarantee and an instance-dependent top-$k$ recoverability guarantee under noisy rewards.

\begin{algorithm}[tb]
\caption{Exponential Reward Weighted SFT}
\label{alg:exp-rw-sft}
\begin{algorithmic}
\STATE {\bfseries Input:} Offline dataset $\mathcal{D} = \{(s_i, a_i, r_i)\}_{i=1}^N$, initial policy $\pi_{\theta}$, temperature $\lambda > 0$, number of epochs $T$
\FOR{$t = 1, \ldots, T$}
    \FOR{each mini-batch $B \subset \mathcal{D}$}
        \STATE Compute loss: 
        \STATE $\mathcal{L}(\theta) \gets -\frac{1}{|B|} \displaystyle\sum_{(s, a, r) \in B} \exp\left(\frac{r}{\lambda}\right) \cdot \log \pi_\theta(a \mid s)$
        \STATE Update: $\theta \gets \theta - \eta \nabla_\theta \mathcal{L}(\theta)$
    \ENDFOR
\ENDFOR
\STATE {\bfseries Output:} Fine-tuned policy $\pi_{\theta}$
\end{algorithmic}
\end{algorithm}

\section{Theoretical Analysis}
We begin our analysis of Algorithm~\ref{alg:exp-rw-sft} with a warm-up exercise that assumes that the offline dataset contains the true ground truth rewards $r^*$. The monotonic policy improvement proof from \cite{wang2018exponentially} can be extended for this case to give the following policy improvement result (full proof in Appendix \ref{subsec:monotonic-proof}): 

\begin{proposition}[Monotonic Policy Improvement]
\label{prop:monotonic}
Let $\pi^*_{\lambda}(a|s) \propto \pi_\beta(a|s) \exp\left(\frac{r^*(s,a)}{\lambda}\right)$. Then in the contextual bandit setting:
\begin{align*}
V^{\pi^*_{\lambda}}(s) \geq V^{\pi_\beta}(s) \quad \text{for all } s \in \mathcal{S}.
\end{align*}
\end{proposition}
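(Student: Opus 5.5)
Fix a state $s$ and write $p(a)=\pi_\beta(a|s)$, $r(a)=r^*(s,a)$, $w(a)=\exp(r(a)/\lambda)$. The tilted policy is then $\pi^*_\lambda(a|s)=p(a)w(a)/\E_p[w]$. The claim $V^{\pi^*_\lambda}(s)\ge V^{\pi_\beta}(s)$ is equivalent to
\begin{equation*}
\frac{\E_p[r\,w]}{\E_p[w]}\ \ge\ \E_p[r],\qquad\text{that is,}\qquad \E_p[r\,w]\ \ge\ \E_p[r]\,\E_p[w].
\end{equation*}
In words, $\mathrm{Cov}_p(r,w)\ge 0$, where $r$ is viewed as a random variable under $a\sim p$. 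Because the setting is a contextual bandit, no state-visitation distribution changes between policies, so proving this inequality pointwise in $s$ gives the proposition.

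The key observation is that $w=\exp(r/\lambda)$ is a nondecreasing function of $r$ when $\lambda>0$. A random variable and a nondecreasing function of it are nonnegatively correlated; this is Chebyshev's sum inequality, or the one-dimensional Harris/FKG inequality. I would prove it directly with the symmetrization trick. Let $a,a'$ be independent draws from $p$. Then
\begin{equation*}
\mathrm{Cov}_p(r,w)=\tfrac12\,\E_{a,a'\sim p}\big[(r(a)-r(a'))(w(a)-w(a'))\big].
\end{equation*}
Each integrand is nonnegative, since $r(a)-r(a')$ and $\exp(r(a)/\lambda)-\exp(r(a')/\lambda)$ always have the same sign. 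This gives the inequality, with equality exactly when $r$ is $p$-a.s.\ constant.

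There is an alternative, more in the spirit of the constrained derivation in Section~\ref{sec:exp_rw}. $\pi^*_\lambda$ maximizes $\E_\pi[r]-\lambda D_{\mathrm{KL}}(\pi\|\pi_\beta)$, and $\pi_\beta$ is feasible with zero KL. Hence
\begin{equation*}
V^{\pi^*_\lambda}\ \ge\ V^{\pi_\beta}+\lambda D_{\mathrm{KL}}(\pi^*_\lambda\|\pi_\beta)\ \ge\ V^{\pi_\beta}.
\end{equation*}
I would mention this as a remark, since it is the Lagrangian argument of \citet{nair2020awac} and \citet{wang2018exponentially} specialized to bandits. It needs the variational characterization of the Gibbs distribution, which the symmetrization proof avoids.

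The only delicate step is well-definedness. We need $\E_p[w]<\infty$ and $\E_p[|r|w]<\infty$ so that the normalization and the covariance are finite. For finite catalogs $\mathcal{A}$, as in the recommender setting, this is automatic. Otherwise I would assume bounded rewards, or restrict to states where these moments exist, and the symmetrization argument then goes through unchanged.
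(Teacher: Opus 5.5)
Your proof is correct, but it follows a different route from the paper's. The paper uses a single-crossing argument. It splits the actions into $\mathcal{A}_+=\{a:\pi^*_\lambda(a|s)\ge\pi_\beta(a|s)\}$ and its complement. It observes that the likelihood ratio $\pi^*_\lambda/\pi_\beta=\exp(r^*/\lambda)/Z(s)$ is increasing in $r^*$, so it crosses $1$ at the threshold $q(s)=\lambda\log Z(s)$. It then pivots on the fact that both policies have unit mass: $\sum_a(\pi^*_\lambda-\pi_\beta)r^*\ge q(s)\sum_a(\pi^*_\lambda-\pi_\beta)=0$. You instead write the gap exactly as $\mathrm{Cov}_p(r,w)/\E_p[w]$ and symmetrize over independent pairs of actions. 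Both arguments use only monotonicity of the weight in $r^*$, so both apply unchanged to any tilt $\pi\propto\pi_\beta f(r^*)$ with $f$ positive and nondecreasing. The paper's version is a one-line pivot that makes the crossing point explicit. Yours gives an exact formula for the improvement, $\frac{1}{2\E_p[w]}\E_{a,a'\sim p}[(r(a)-r(a'))(w(a)-w(a'))]$, from which the equality case is immediate: the improvement is strict unless $r^*(s,\cdot)$ is $\pi_\beta$-a.s.\ constant. It also avoids a small looseness in the paper's threshold claim, which as stated fails for actions with $\pi_\beta(a|s)=0$; these land in $\mathcal{A}_+$ whatever their reward, although they contribute nothing. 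Your alternative remark is exactly the argument the paper itself uses in Step~2 of the proof of Theorem~\ref{thm:noisy_improvement}, where it is applied to $\hat r$ and the KL term is discarded. Keeping that term, as you do, gives the quantitative bound $V^{\pi^*_\lambda}(s)-V^{\pi_\beta}(s)\ge\lambda D_{\mathrm{KL}}(\pi^*_\lambda(\cdot|s)\,\|\,\pi_\beta(\cdot|s))$, at the price of relying on the variational characterization from Section~\ref{sec:exp_rw}.
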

This simple proof shows that by performing SFT with exponentially-weighted rewards scaled by a temperature, we obtain a policy that is monotonically better than the logging policy. However, in practice, user feedback signals are inherently stochastic: a user's rating for the same item may vary across sessions due to mood, context, or attention; click behavior is noisy and influenced by position bias; and implicit signals like dwell time fluctuate based on external factors unrelated to true preference. Next, we aim to understand policy improvement from this noisy reward perspective. We being by making the following simplifying assumptions on the noise:

\begin{assumption}[Sub-Gaussian Reward Noise]
\label{ass:noise}
Observed rewards satisfy $\hat{r}(s,a) = r^*(s,a) + \xi(s,a)$, where $r^*$ denotes the true expected reward for state-action pair $(s,a)$ and $\{\xi(s,a)\}_{a \in \mathcal{A}}$ are independent, zero-mean, $\sigma$-sub-Gaussian random variables.
\end{assumption}

The sub-gaussian condition is standard and encompasses most reward distributions in recommender systems: bounded signals such as discrete ratings or binary clicks are sub-Gaussian by Hoeffding's lemma, as are continuous rewards with Gaussian noise. The zero-mean and independence conditions are standard in stochastic bandit analysis \cite{auer2002finite}. Given this noise structure, we derive the following guarantee (see Appendix \ref{subsec:noisy-improvement-proof} for the full proof):

\begin{theorem}[Policy Improvement Under Noisy Rewards]
\label{thm:noisy_improvement}
Let $\pi_\beta$ be the behavior policy and $\pi_\lambda$ be the policy obtained by exponential reward-weighted SFT with temperature $\lambda > 0$ using noisy rewards $\hat{r} = r^* + \xi$, where $\xi$ is $\sigma$-sub-Gaussian (Assumption~\ref{ass:noise}). With probability at least $1 - \delta$:
\begin{equation}
V^{\pi_\lambda}(s) \geq V^{\pi_\beta}(s) - 2\sigma\sqrt{2\log(2|\mathcal{A}|/\delta)},
\end{equation}
where $|\mathcal{A}|$ is the number of actions.
\end{theorem}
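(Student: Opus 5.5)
The idea is to compare $\pi_\lambda$ with $\pi_\beta$ through the noisy rewards, using Proposition~\ref{prop:monotonic} as a black box. We interpret $\pi_\lambda$ as the nonparametric solution $\pi_\lambda(a|s)\propto \pi_\beta(a|s)\exp(\hat r(s,a)/\lambda)$, i.e. exact projection. Fix a state $s$ and define the event
\[
\mathcal{E}=\Bigl\{\max_{a\in\mathcal{A}}|\xi(s,a)|\le \epsilon\Bigr\},\qquad \epsilon=\sigma\sqrt{2\log(2|\mathcal{A}|/\delta)}.
\]

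\textbf{Step 1 (concentration).} By Assumption~\ref{ass:noise}, each $\xi(s,a)$ is zero-mean and $\sigma$-sub-Gaussian. The two-sided tail bound gives $\Pr(|\xi(s,a)|>t)\le 2\exp(-t^2/(2\sigma^2))$. A union bound over the $|\mathcal{A}|$ actions at $t=\epsilon$ gives $\Pr(\mathcal{E}^c)\le 2|\mathcal{A}|\exp(-\epsilon^2/(2\sigma^2))=\delta$. Independence across actions is not needed for this step; the union bound alone suffices.

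\textbf{Step 2 (improvement under noisy rewards).} Apply Proposition~\ref{prop:monotonic} with $r^*$ replaced by $\hat r(s,\cdot)$. Its proof uses only the form $\pi\propto\pi_\beta\exp(r/\lambda)$ for an arbitrary reward vector. Concretely, the argument is that $a\mapsto\exp(r(s,a)/\lambda)$ is increasing in $r$, so reweighting $\pi_\beta$ by it is positively correlated with $r$ (Chebyshev's association inequality). This yields
\[
\hat V^{\pi_\lambda}(s):=\sum_a\pi_\lambda(a|s)\hat r(s,a)\;\ge\;\hat V^{\pi_\beta}(s):=\sum_a\pi_\beta(a|s)\hat r(s,a),
\]
and it holds deterministically for every noise realization.

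\textbf{Step 3 (transfer back to true rewards).} For any policy $\pi$, $|\hat V^\pi(s)-V^\pi(s)|=\bigl|\sum_a\pi(a|s)\xi(s,a)\bigr|\le\max_a|\xi(s,a)|\le\epsilon$ on $\mathcal{E}$, since $\pi(\cdot|s)$ is a probability vector. Here $V^\pi$ is computed with $r^*$. Chaining the bounds gives
\[
V^{\pi_\lambda}(s)\ge \hat V^{\pi_\lambda}(s)-\epsilon\ge \hat V^{\pi_\beta}(s)-\epsilon\ge V^{\pi_\beta}(s)-2\epsilon,
\]
which is exactly the claimed bound $V^{\pi_\lambda}(s)\ge V^{\pi_\beta}(s)-2\sigma\sqrt{2\log(2|\mathcal{A}|/\delta)}$, holding with probability at least $1-\delta$.

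\textbf{Main obstacle.} The delicate point is Step 3 for $\pi_\lambda$. This policy depends on the same noise $\xi$ it is evaluated against, so a naive per-policy concentration bound on $\sum_a\pi(a|s)\xi(s,a)$ would be invalid. Using the uniform bound $\max_a|\xi(s,a)|$ avoids the issue, because it controls every policy at once, including data-dependent ones. This is also why the penalty is $\log|\mathcal{A}|$ rather than constant, and why we need the two-sided union bound over all actions.

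A secondary concern is making sure Proposition~\ref{prop:monotonic} really applies with an arbitrary real-valued reward. I would restate its correlation argument briefly to confirm this. The guarantee is per state $s$, which matches how the theorem is stated.
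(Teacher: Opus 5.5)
Your proof is correct, and its skeleton is exactly the paper's: a union bound giving $\max_a|\xi(s,a)|\le\epsilon$, then an improvement inequality on the noisy rewards, then a transfer to $r^*$ costing $\epsilon$ on each side. As in the paper, the uniform bound is what handles the data-dependence of $\pi_\lambda$.

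The only difference is how you get the middle inequality $\sum_a\pi_\lambda(a|s)\hat r(s,a)\ge\sum_a\pi_\beta(a|s)\hat r(s,a)$. The paper obtains it variationally. Since $\pi_\lambda(\cdot|s)$ maximizes $F(\pi)=\sum_a\pi(a|s)\hat r(s,a)-\lambda D_{\mathrm{KL}}(\pi\|\pi_\beta)$, we have $F(\pi_\lambda)\ge F(\pi_\beta)$. Because $D_{\mathrm{KL}}(\pi_\beta\|\pi_\beta)=0$, this yields the inequality with slack $\lambda D_{\mathrm{KL}}(\pi_\lambda\|\pi_\beta)$, which the paper then discards.

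You instead reuse Proposition~\ref{prop:monotonic} as a deterministic fact about an arbitrary reward vector. The threshold in its proof is $q=\lambda\log Z$ with $Z=\sum_a\pi_\beta(a|s)e^{\hat r(s,a)/\lambda}$; equivalently, two nondecreasing functions of $\hat r$ have nonnegative covariance under $\pi_\beta$. This is legitimate because nothing in that proof uses that $r^*$ is the true mean.

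What each route buys:
\begin{itemize}
\item Yours is more elementary and avoids the Gibbs variational characterization, which the paper only sketches via a functional derivative in Section~\ref{sec:exp_rw}. It also works verbatim for any tilt $\pi\propto\pi_\beta\,g(\hat r)$ with $g\ge 0$ nondecreasing. That makes transparent why the bound is independent of $\lambda$, and indeed of the exponential form.
\item The paper's route is tied to the KL-regularized objective but tolerates inexact optimization: any policy that is $\eta$-suboptimal for $F$ satisfies the same bound minus $\eta$. It also keeps the KL slack available if one wants a sharper, $\lambda$-dependent statement.
\end{itemize}
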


The gap scales as $O(\sigma\sqrt{\log|\mathcal{A}|})$, so the logarithmic dependence on catalog size means the bound remains informative even for large item catalogs. In the noiseless limit ($\sigma \to 0$), the bound recovers the exact monotonic improvement guarantee of Proposition~\ref{prop:monotonic}. Note that the bound holds pointwise for each state $s$; a uniform guarantee over all states would require an additional union bound over $|\mathcal{S}|$, yielding an $O(\sigma\sqrt{\log(|\mathcal{S}||\mathcal{A}|)})$ dependence. Note that this bound is independent of the regularizer/temperature $\lambda$, if the observed rewards are bounded, we are able to work out a temperature-dependent bound as follows (full proof in \ref{subsec:temp-bound-proof}):

\begin{theorem}[Temperature-Dependent Bound]
\label{thm:temp_bound}
Under the conditions of Theorem~\ref{thm:noisy_improvement}, if rewards are bounded $r^*(s,a) \in [0, R_{\max}]$, then with probability at least $1-\delta$:
\begin{equation}
    V^{\pi_\lambda}(s) \geq V^{\pi_\beta}(s) - R_{\max}\left(e^{2\epsilon/\lambda} - 1\right),
\end{equation}
where $\epsilon = \sigma\sqrt{2\log(2|\mathcal{A}|/\delta)}$. When $\lambda \geq 2\epsilon$, this simplifies to:
\begin{equation}
    V^{\pi_\lambda}(s) \geq V^{\pi_\beta}(s) - \frac{4R_{\max}\epsilon}{\lambda}.
\end{equation}
\end{theorem}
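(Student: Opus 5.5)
We compare $\pi_\lambda(a|s) \propto \pi_\beta(a|s)\exp(\hat r(s,a)/\lambda)$ with the noiseless tilt $\pi^*_\lambda(a|s) \propto \pi_\beta(a|s)\exp(r^*(s,a)/\lambda)$. Proposition~\ref{prop:monotonic} already gives $V^{\pi^*_\lambda}(s) \ge V^{\pi_\beta}(s)$. It therefore suffices to show $V^{\pi_\lambda}(s) \ge V^{\pi^*_\lambda}(s) - R_{\max}(e^{2\epsilon/\lambda}-1)$. Throughout, values are measured under the true reward $r^*$.

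First, work on the good event $\mathcal{E} = \{\max_a |\xi(s,a)| \le \epsilon\}$. By the sub-Gaussian tail $\Pr(|\xi(s,a)|>t) \le 2e^{-t^2/(2\sigma^2)}$ and a union bound over $|\mathcal{A}|$ actions, $\mathcal{E}$ has probability at least $1-\delta$ when $\epsilon=\sigma\sqrt{2\log(2|\mathcal{A}|/\delta)}$. This is the same event used in Theorem~\ref{thm:noisy_improvement}, so it can be reused.

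Second, on $\mathcal{E}$, derive a pointwise likelihood-ratio bound. The ratio is
\[
\frac{\pi_\lambda(a|s)}{\pi^*_\lambda(a|s)} = \frac{e^{\xi(s,a)/\lambda}}{\sum_{a'} \pi^*_\lambda(a'|s)\, e^{\xi(s,a')/\lambda}} .
\]
The numerator lies in $[e^{-\epsilon/\lambda}, e^{\epsilon/\lambda}]$, and the denominator is an average of such terms, so it lies in the same range. Hence the ratio lies in $[e^{-2\epsilon/\lambda}, e^{2\epsilon/\lambda}]$ for every $a$.

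Third, convert the ratio bound into a value bound using only $r^* \ge 0$. We have
\[
V^{\pi_\lambda}(s) = \sum_a \pi_\lambda(a|s) r^*(s,a) \ge e^{-2\epsilon/\lambda} V^{\pi^*_\lambda}(s) .
\]
Rearranging gives
\[
V^{\pi^*_\lambda}(s) - V^{\pi_\lambda}(s) \le (1-e^{-2\epsilon/\lambda})\, V^{\pi^*_\lambda}(s) \le (1-e^{-2\epsilon/\lambda}) R_{\max} .
\]
Since $1-e^{-x} \le e^{x}-1$, this is at most $R_{\max}(e^{2\epsilon/\lambda}-1)$. Combining with Proposition~\ref{prop:monotonic} yields the first claim.

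Fourth, for the simplified form, set $x = 2\epsilon/\lambda \le 1$ when $\lambda \ge 2\epsilon$. By convexity of $e^x$ on $[0,1]$, $e^x - 1 \le (e-1)x \le 2x$. This gives the loss bound $R_{\max}\cdot 2\cdot 2\epsilon/\lambda = 4R_{\max}\epsilon/\lambda$.

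The main obstacle is the second step: the normalizer's perturbation must be controlled multiplicatively rather than additively, so that the bound does not depend on $\pi_\beta$ or on $|\mathcal{A}|$ beyond the union bound. If the crude factor of $2$ in the exponent proved insufficient, I would instead bound total variation between $\pi_\lambda$ and $\pi^*_\lambda$. The multiplicative argument above seems enough for the stated constants.
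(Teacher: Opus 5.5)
Your proof is correct and follows the paper's skeleton. It uses the same decomposition $V^{\pi_\lambda}(s)-V^{\pi_\beta}(s)=\bigl(V^{\pi^*_\lambda}(s)-V^{\pi_\beta}(s)\bigr)-\bigl(V^{\pi^*_\lambda}(s)-V^{\pi_\lambda}(s)\bigr)$, with Proposition~\ref{prop:monotonic} handling the first term, and the same good event. It also uses the same key likelihood-ratio bound $\pi_\lambda(a|s)/\pi^*_\lambda(a|s)\in[e^{-2\epsilon/\lambda},e^{2\epsilon/\lambda}]$, obtained by recognizing $Z/Z^*$ as a $\pi^*_\lambda$-average of the noise weights $e^{\xi(s,a)/\lambda}$.

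Where you differ is the final conversion. The paper goes through total variation: $|V^{\pi^*_\lambda}(s)-V^{\pi_\lambda}(s)|\le R_{\max}\sum_a|\pi^*_\lambda(a|s)-\pi_\lambda(a|s)|=R_{\max}\sum_a\pi^*_\lambda(a|s)\,|w(a)\rho-1|\le R_{\max}(e^{2\epsilon/\lambda}-1)$, with $w(a)=e^{\xi(s,a)/\lambda}$ and $\rho=Z^*/Z$. That bound is two-sided and uses only $|r^*|\le R_{\max}$, not the sign of $r^*$. You instead exploit $r^*\ge 0$ to push the lower ratio bound through the sum, $V^{\pi_\lambda}(s)\ge e^{-2\epsilon/\lambda}V^{\pi^*_\lambda}(s)$. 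This is more direct and gives the smaller loss $R_{\max}(1-e^{-2\epsilon/\lambda})$.

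Since $1-e^{-x}\le x$, your intermediate bound already gives $V^{\pi_\lambda}(s)\ge V^{\pi_\beta}(s)-2R_{\max}\epsilon/\lambda$ for every $\lambda>0$. That is half the paper's constant, with no restriction $\lambda\ge 2\epsilon$. Your relaxations $1-e^{-x}\le e^{x}-1$ and $e^{x}-1\le 2x$ are needed only to land on the stated form. Your contingency of switching to total variation is therefore unnecessary: that is the paper's route, and as carried out there it yields exactly $R_{\max}(e^{2\epsilon/\lambda}-1)$, which is weaker than what you already have.
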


Theorem~\ref{thm:temp_bound} reveals an explicit robustness-improvement tradeoff controlled by $\lambda$. Large $\lambda$ suppresses the noise cost ($e^{2\epsilon/\lambda} - 1 \to 0$) but drives $\pi_\lambda \to \pi_\beta$, yielding no improvement. Small $\lambda$ enables aggressive re-ranking but amplifies sensitivity to reward noise. In the regime $\lambda \geq 2\epsilon$, the linearized bound $4R_{\max}\epsilon/\lambda$ shows that the noise cost decays as $O(1/\lambda)$. Given noise level $\sigma$ and desired degradation tolerance $\tau > 0$, a sufficient condition for the value degradation to remain within $\tau$ is:
\begin{equation}
    \lambda \geq \frac{2\epsilon}{\log(1 + \tau/R_{\max})},
\end{equation}
which follows from requiring $R_{\max}(e^{2\epsilon/\lambda} - 1) \leq \tau$. While this lower bound is conservative due to worst-case analysis over the full action space, it reveals the qualitative structure of the tradeoff: the minimum safe temperature scales linearly with the noise level $\sigma$ and inversely with the log-tolerance $\log(1 + \tau/R_{\max})$. In practice, we recommend tuning $\lambda$ via validation, as the theoretical lower bound can be significantly loose.

\section{Extension to the MDP Setting}
\label{sec:mdp}

The objective in Equation~\eqref{eq:constrained_opt} extends directly to the MDP setting:
\begin{align}
    &\mathbb{E}_{s \sim d^{\pi_\beta}, a \sim \pi}\!\left[A^{\pi_k}(s,a)\right] - \lambda D_{\mathrm{KL}} \nonumber \\
    &= \mathbb{E}_{s \sim d^{\pi_\beta}, a \sim \pi}\!\left[r(s,a)\right] - 
    \underbrace{\mathbb{E}_{s \sim d^{\pi_\beta}}\!\left[V^{\pi_k}(s)\right]}_{\text{constant w.r.t.}\ \pi} 
    - \lambda D_{\mathrm{KL}}.
\end{align}
Since $d^{\pi_\beta}$ is fixed and $V^{\pi_k}(s)$ does not depend on $a$, the value term is constant with respect to $\pi$ and drops out. Since $d^{\pi_\beta}(s) = \frac{1}{T}\sum_{t=1}^T P(s_t = s \mid \pi_\beta)$ marginalizes over timesteps, the remaining objective satisfies:
\begin{equation}
    \mathbb{E}_{s \sim d^{\pi_\beta}, a \sim \pi}\!\left[r(s,a)\right] = 
    \frac{1}{T}\mathbb{E}_\tau\!\left[R(\tau)\right],
\end{equation}
where $R(\tau) = \sum_{t=1}^T r(s_t, a_t)$ is the trajectory-level rewards. Since $\frac{1}{T}$ is constant with respect to $\pi$, the objective is equivalent to:
\begin{equation}
    \max_\pi \; \mathbb{E}_\tau\!\left[R(\tau)\right] - \lambda D_{\mathrm{KL}}.
\end{equation}

The contextual bandit setting is recovered as the special case $T=1$ where $R(\tau) = r(s,a)$. Since the objectives are identical, the same Lagrangian derivation applies with $R(\tau)$ in place of $r(s,a)$, yielding the trajectory-level optimal policy:
\begin{equation}
    \pi^*(\tau) \propto \pi_\beta(\tau)\exp\!\left(\frac{R(\tau)}{\lambda}\right),
\end{equation}
and the corresponding training objective:
\begin{equation}
    \mathcal{L}(\theta) = -\frac{1}{|\mathcal{D}|}\sum_{\tau \in \mathcal{D}}
    \exp\!\left(\frac{R(\tau)}{\lambda}\right)\sum_{t=1}^T
    \log\pi_\theta(a_t \mid s_t).
    \label{eq:mdp_objective}
\end{equation}

The theoretical guarantees of Theorems~\ref{thm:noisy_improvement} 
and~\ref{thm:temp_bound} hold without modification, requiring only that the observed return $\hat{R}(\tau) = R^*(\tau) + \xi(\tau)$ satisfies Assumption~\ref{ass:noise}. This holds under either of the following conditions: (i) if per-step noise terms are independent, sub-Gaussianity of each implies sub-Gaussianity of their sum by standard closure properties; (ii) if per-step noise terms are dependent, as they naturally are along a Markov trajectory, we require only that the trajectory-level aggregate noise $\xi(\tau) = \sum_{t=1}^T \xi_t$ is itself sub-Gaussian, which we take as a direct assumption on the noise process. Condition (i) is a special case of (ii).

One practical consideration is reward variance. Since $R(\tau)$ is a sum of up to $T$ per-step rewards, its variance grows with trajectory length. The theoretically principled way to control the influence of high-variance returns is through $\lambda$ itself. Theorem~\ref{thm:temp_bound} explicitly characterizes this tradeoff, showing the cost of variance reduction in terms of policy improvement. In practice, standardizing $\tilde{R}(\tau) = (R(\tau) - \mu)/\sigma$ before exponential weighting serves as a useful preprocessing step that stabilizes the numerical range of weights and makes $\lambda$ more interpretable across datasets with different return scales, corresponding to a reparametrization $\lambda \to \lambda\sigma$ by Observation~\ref{obs:invariance}.

\section{Experiments}
\begin{table*}[!htbp]
\centering
\caption{Dataset Statistics}
\label{tab:dataset_stats}
\begin{tabular}{lccccc}
\toprule
Dataset & Users & Items & Avg. Interactions/User & Test Cases ($r \geq 4.5$) \\
\midrule
ML-1M & 6,040 & 3,706 & 165.60 & 1,530 \\
ML-20M & 138,493 & 26,744 & 144.41 & 40,667 \\
Amazon Books & 811,227 & 695,762 & 14.47 & 424,271 \\
Netflix & O(Millions) & O(Thousands) & O(Tens) & O(Thousands) \\
\bottomrule
\end{tabular}
\end{table*}

\textbf{Dataset:} We use three open-source recommendation datasets containing sequential user-item interactions: MovieLens 1M (ML-1M), MovieLens 20M (ML-20M)~\cite{harper2015movielens}, and Amazon Reviews~\cite{mcauley2015image}. In addition, we also present results on a large-scale proprietary dataset from Netflix. These datasets contains data for multiple users. For each user, it contains interaction data, i.e. what items did the user interact with and the corresponding rewards the user assigned to each interaction.

Table~\ref{tab:dataset_stats} summarizes the count of users, items and average interactions per user for each of the three dataset. Because of the proprietary nature of the Netflix data, we only show the order of magnitude for the data instead of the actual count.
The open source datasets vary significantly in scale and sparsity, with ML-1M being the smallest and densest, ML-20M providing a larger movie domain, and Amazon Books offering the most diverse item catalog with sparser user interactions, while the proprietary Netflix data is the largest of them all and helps showcase the performance of our data on realistic large-scale real-world data. Additional information about dataset rating distribution is shown in \ref{fig:rating_dist} in the Appendix.

\textbf{Model:} We conduct experiments using the HSTU~\cite{zhai2024actions} base model pretrained on each of the datasets. This will serve as the baseline for our method and represents the Behavior Cloning (BC) algorithm that blindly mimics each user. Next we co-train a reward model as a shallow reward head on top of the generative recommender. It predicts the reward for the most recently selected title based on a user's interaction history. 

\textbf{Algorithms:} We benchmark our method against four algorithms spanning behavior cloning, weighted SFT and standard RLHF:
\begin{itemize}
    \item \textbf{BC (Behavior Cloning)}: The base HSTU architecture \cite{zhai2024actions} trained to predict a user's next interaction given their history, without optimizing for any reward signal.
    \item \textbf{Reward-SFT (Reward-weighted Supervised Fine-Tuning}: The model is post-trained using reward-weighted log probabilities as described in \cite{mukherjeeoffline}.
    \item \textbf{DPO (Direct Preference Optimization)}: This is policy optimization method that directly learns from pairwise preference data by increasing the likelihood of preferred actions over dispreferred ones \cite{rafailov2023direct}. Since binary comparison data is unavailable in recommendation systems, we implement an online variant of DPO where binary preferences are obtained by generating two responses and comparing them via reward model scores, similar to the online DPO in \cite{deng2025onerec}.
    \item \textbf{PPO (Proximal Policy Optimization)}: A standard RLHF approach \cite{stiennon2020learning, ouyang2022training} that updates the policy using clipped probability ratios and estimated advantages computed from the learned reward model \cite{schulman2017proximal}.
    \item \textbf{Exp-RSFT (Exponential Reward-weighted Supervised Fine-Tuning)}: Our algorithm, described in \ref{alg:exp-rw-sft}, post-trains the model by weighting log-probabilities with the exponential of the reward scaled by a temperature parameter.
\end{itemize}

We run the algorithms for 30, 12, and 10 epochs for the ML-1M, ML-20M, and Amazon Books datasets.

\textbf{Evaluation:} We evaluate on the standard test splits of ML-1M, ML-20M, and Amazon Books, but restrict our metrics to test cases where the held-out target item received a rating $\geq 4.5$. This design choice aligns evaluation with our training objective: we aim to measure each method's ability to surface items that users would rate highly, rather than simply predicting any future interaction regardless of user satisfaction. Table~\ref{tab:dataset_stats} reports the number of test cases meeting this criterion for each dataset.   

On this filtered test set, we report five standard recommendation metrics. Let $r_i$ denote the rank assigned to the ground truth item for test case $i$ among $N$ total cases. \textit{Hit Rate} (HR@$K$) measures the proportion of cases where the ground truth item appears in the top-$K$ recommendations. \textit{Normalized Discounted Cumulative Gain} (NDCG@$K$) additionally accounts for ranking position, assigning higher scores to ground truth items ranked near the top. \textit{Mean Reciprocal Rank} (MRR) averages the inverse rank across all test cases. Formally:
\begin{align}
    \text{HR@}K &= \frac{1}{N} \sum_{i=1}^{N} \mathbbm{1}(r_i \leq K), \\
    \text{NDCG@}K &= \frac{1}{N} \sum_{i=1}^{N} \frac{\mathbbm{1}(r_i \leq K)}{\log_2(r_i + 1)}, \\
    \text{MRR} &= \frac{1}{N} \sum_{i=1}^{N} \frac{1}{r_i}.
\end{align}
We report HR and NDCG at cutoffs $K \in \{10, 50\}$.

\begin{figure*}[t]
  \vskip 0.2in
  \begin{center}    \centerline{\includegraphics[width=\columnwidth*2]{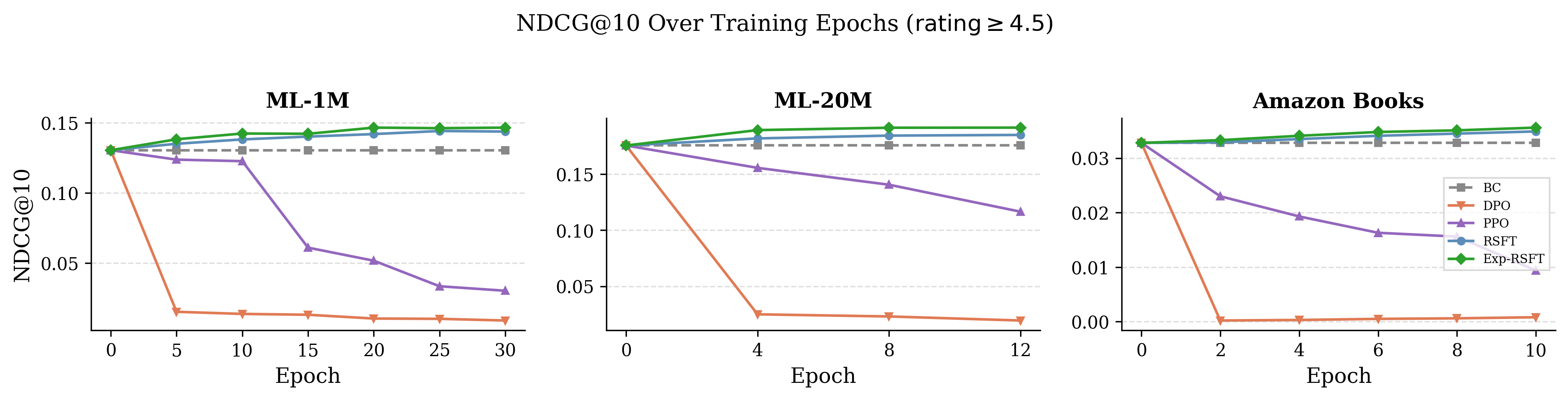}}
    \caption{
      Figure showing the dramatic collapse of PPO and DPO due to reward-hacking.
    }
    \label{fig:main_metrics}
  \end{center}
  \vskip -0.2in
\end{figure*}

\begin{figure*}[t]
  \vskip 0.2in
  \begin{center}    \centerline{\includegraphics[width=\columnwidth*2]{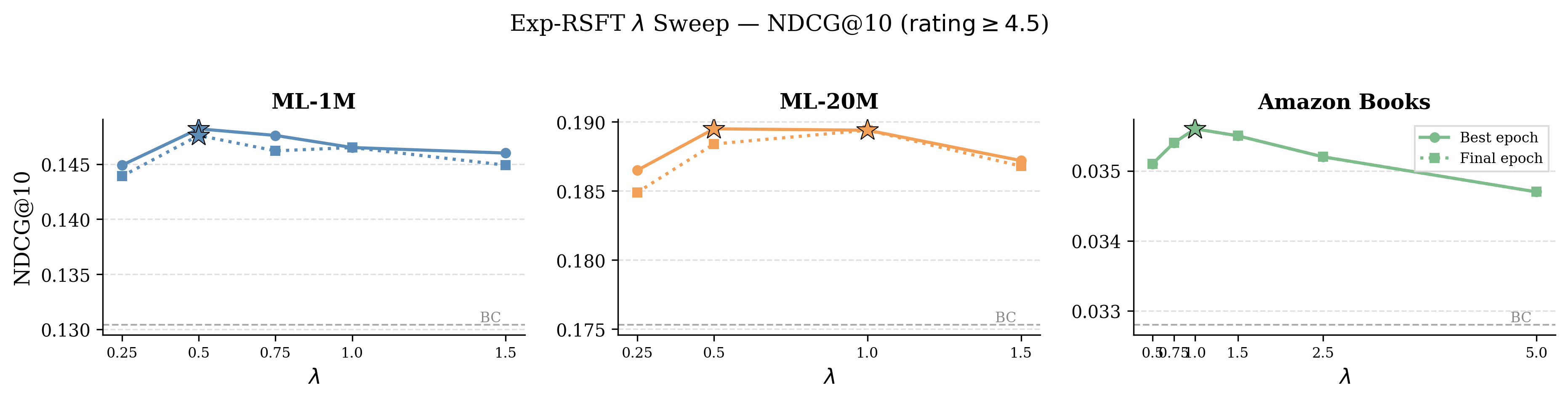}}
    \caption{
      NDCG@10 for different values of $\lambda$ for all three datasets.
    }
    \label{fig:lambda_main}
  \end{center}
  \vskip -0.2in
\end{figure*}

\textbf{Results:} The Tables~\ref{tab:ml1m_results}, \ref{tab:ml20m_results}, \ref{tab:amzn_results}, and \ref{tab:netflix_results} summarize the results for the datasets ML-1M, ML-20M, Amazon Books, and Netflix respectively. We see that our algorithm (Exp-RSFT) consistently outperforms the other benchmark algorithms across the four datasets. We also observe this trend across the training epochs (see Appendix \ref{app:metric_evolution} for evolution of metrics across training epochs for the open source datasets). 

\textbf{Evidence for Reward Hacking:} Tables~\ref{tab:ml1m_results}, \ref{tab:ml20m_results}, \ref{tab:amzn_results}, and \ref{tab:netflix_results} show that both PPO and DPO experience catastrophic collapse across all datasets. We hypothesize that this is due to reward hacking and test this hypothesis using two approaches:

\textit{Benchmarking reward model training metrics:} To assess the quality of the learned reward model, we compare its Mean Squared Error (MSE) and Mean Absolute Error (MAE) against three simple baselines:
\begin{itemize}
    \item \textbf{User Mean:} Predicts future rewards as the historical average reward for each user.
    \item \textbf{Item Mean:} Predicts the reward for an item as its historical average across all users.
    \item \textbf{Global Mean:} Predicts all rewards as the global mean rating across all users and items.
\end{itemize}

We report these metrics in Table~\ref{tab:rm_baselines} for all four datasets. We observe that, except for MAE on ML-20M and MSE on Netflix, the reward model does not achieve the lowest error. Even in the two cases where it does, its performance is comparable to the simple item-mean baseline. This suggests that the reward model offers limited improvement over naive predictors. This result is intuitive, since a user's historical interactions typically cover only a small subset of the catalog, making it difficult to accurately predict their responses to unexplored items.

\textit{Reward model-as-judge evaluation:} Given the limited generalization capacity of reward models, we investigate whether each algorithm over-optimizes for this imperfect signal, a phenomenon known in the LLM literature as reward hacking~\cite{pan2022effects, skalse2022defining, lambert2023alignment, rafailov2024scaling}. We compute the average reward model score across 10 generations from each of 10,000 evaluation contexts, reported as ``Avg Reward'' in Tables~\ref{tab:ml1m_results}, \ref{tab:ml20m_results}, \ref{tab:amzn_results}, and \ref{tab:netflix_results}. We find that this metric does not correlate with true recommendation quality. Notably, PPO and DPO achieve the highest reward model scores across all four datasets, suggesting that these algorithms over-optimize for a reward signal that fails to capture genuine user preferences.

\textbf{Robustness–Improvement Tradeoff with $\lambda$:} Theorem~\ref{thm:temp_bound} predicts a fundamental tension in the choice of temperature $\lambda$: small $\lambda$ aggressively re-weights toward high-reward actions but amplifies sensitivity to reward noise, while large $\lambda$ suppresses the noise cost but drives the learned policy toward the behavior policy, yielding diminishing improvement. We validate this tradeoff empirically by sweeping a range of $\lambda$ values across all three datasets. Figure~\ref{fig:lambda_main} illustrates the trend for NDCG@10; other metrics exhibit similar behavior and are reported in Appendix~\ref{app:lambda-sweep}. All five evaluation metrics display a clear inverted-U shape as a function of $\lambda$ across all three datasets, with performance peaking near $\lambda \approx 0.5$--$1.0$ and declining on either side. This pattern holds whether we select the best or final training epoch, ruling out early stopping as a confound. As expected, large data reduces policy improvement as it theoretically would recover the behavior policy. But more surprisingly, and as predicted by Theorem~\ref{thm:temp_bound}, values of $\lambda$ smaller than the optimum also lead to degraded performance. These results confirm that moderate reward weighting strikes the optimal balance between exploiting the reward signal and regularizing against its noise.

\FloatBarrier

\begin{table*}[!htbp]
  \centering
  \caption{Reward model prediction performance compared to naive baselines. The trained reward model fails to outperform the simple Item Mean baseline on most metrics. Best results per row are \textbf{bolded}.}
  \label{tab:rm_baselines}
  \begin{tabular}{llcccc}
    \toprule
    Dataset & Metric & User Mean & Item Mean & Global Mean & Trained RM \\
    \midrule
    \multirow{2}{*}{ML-1M}
      & MSE & 1.0784 & \textbf{0.9179} & 1.2394 & 1.2166 \\
      & MAE & 0.8232 & \textbf{0.7619} & 0.9268 & 0.8680 \\
    \midrule
    \multirow{2}{*}{ML-20M}
      & MSE & 0.9550 & \textbf{0.8898} & 1.0939 & 0.8963 \\
      & MAE & 0.7588 & 0.7320 & 0.8423 & \textbf{0.7307} \\
    \midrule
    \multirow{2}{*}{Amazon Books}
      & MSE & 1.1578 & \textbf{0.7290} & 1.1868 & 1.1067 \\
      & MAE & 0.7411 & \textbf{0.5651} & 0.8636 & 0.8140 \\
    \midrule
    \multirow{2}{*}{Netflix}
      & MSE & \textbf{2.412} & 2.568 & 2.568 & 2.486 \\
      & MAE & 2.112 & 2.442 & 2.451 & \textbf{1.916} \\
    \bottomrule
  \end{tabular}
\end{table*}

\begin{table*}[!htbp]
  \centering
  \caption{Recommendation performance on ML-1M (rating $\geq 4.5$). Best results per metric are \textbf{bolded}. Percentages indicate relative change from BC baseline.}
  \label{tab:ml1m_results}
  \begin{tabular}{lcccccc}
    \toprule
    Method & NDCG@10 & NDCG@50 & HR@10 & HR@50 & MRR & Avg Reward \\
    \midrule
    BC & 0.1304 & 0.1866 & 0.2405 & 0.4967 & 0.1116 & 3.8672 \\
    DPO & 0.0091 (-93.0\%) & 0.0231 (-87.6\%) & 0.0163 (-93.2\%) & 0.0824 (-83.4\%) & 0.0122 (-89.1\%) & 3.3737 (-12.8\%) \\
    PPO & 0.0303 (-76.8\%) & 0.0617 (-66.9\%) & 0.0673 (-72.0\%) & 0.2124 (-57.2\%) & 0.0285 (-74.5\%) & \textbf{4.4775} (+15.8\%) \\
    RSFT & 0.1437 (+10.2\%) & 0.2003 (+7.3\%) & 0.2608 (+8.4\%) & 0.5163 (+3.9\%) & 0.1230 (+10.2\%) & 3.6558 (-5.5\%) \\
    Exp-RSFT & \textbf{0.1465} (+12.3\%) & \textbf{0.2029} (+8.7\%) & \textbf{0.2712} (+12.8\%) & \textbf{0.5261} (+5.9\%) & \textbf{0.1235} (+10.7\%) & 3.6933 (-4.5\%) \\
    \bottomrule
  \end{tabular}
\end{table*}

\begin{table*}[!htbp]
  \centering
  \caption{Recommendation performance on ML-20M (rating $\geq 4.5$). Best results per metric are \textbf{bolded}. Percentages indicate relative change from BC baseline.}
  \label{tab:ml20m_results}
  \begin{tabular}{lcccccc}
    \toprule
    Method & NDCG@10 & NDCG@50 & HR@10 & HR@50 & MRR & Avg Reward \\
    \midrule
    BC & 0.1753 & 0.2313 & 0.3035 & 0.5569 & 0.1510 & 3.7388 \\
    DPO & 0.0196 (-88.8\%) & 0.0342 (-85.2\%) & 0.0372 (-87.7\%) & 0.1052 (-81.1\%) & 0.0195 (-87.1\%) & \textbf{4.4008} (+17.7\%) \\
    PPO & 0.1164 (-33.6\%) & 0.1688 (-27.0\%) & 0.2141 (-29.5\%) & 0.4536 (-18.6\%) & 0.1008 (-33.2\%) & 3.7547 (+0.4\%) \\
    RSFT & 0.1847 (+5.4\%) & 0.2403 (+3.9\%) & 0.3161 (+4.2\%) & 0.5677 (+1.9\%) & 0.1593 (+5.5\%) & 3.7904 (+1.4\%) \\
    Exp-RSFT & \textbf{0.1912} (+9.1\%) & \textbf{0.2462} (+6.4\%) & \textbf{0.3238} (+6.7\%) & \textbf{0.5726} (+2.8\%) & \textbf{0.1651} (+9.3\%) & 3.8611 (+3.3\%) \\
    \bottomrule
  \end{tabular}
\end{table*}

\begin{table*}[!htbp]
  \centering
  \caption{Recommendation performance on Amazon Books (rating $\geq 4.5$). Best results per metric are \textbf{bolded}. Percentages indicate relative change from BC baseline.}
  \label{tab:amzn_results}
  \begin{tabular}{lcccccc}
    \toprule
    Method & NDCG@10 & NDCG@50 & HR@10 & HR@50 & MRR & Avg Reward \\
    \midrule
    BC & 0.0328 & 0.0478 & 0.0589 & 0.1279 & 0.0296 & 4.2824 \\
    DPO & 0.0008 (-97.6\%) & 0.0020 (-95.8\%) & 0.0019 (-96.8\%) & 0.0075 (-94.1\%) & 0.0013 (-95.6\%) & \textbf{4.8615} (+13.5\%) \\
    PPO & 0.0094 (-71.3\%) & 0.0156 (-67.4\%) & 0.0184 (-68.8\%) & 0.0469 (-63.3\%) & 0.0088 (-70.3\%) & 4.4549 (+4.0\%) \\
    RSFT & 0.0349 (+6.4\%) & 0.0514 (+7.5\%) & 0.0630 (+7.0\%) & 0.1388 (+8.5\%) & 0.0314 (+6.1\%) & 4.3350 (+1.2\%) \\
    Exp-RSFT & \textbf{0.0356} (+8.5\%) & \textbf{0.0520} (+8.8\%) & \textbf{0.0641} (+8.8\%) & \textbf{0.1397} (+9.2\%) & \textbf{0.0319} (+7.8\%) & 4.3374 (+1.3\%) \\
    \bottomrule
  \end{tabular}
\end{table*}

\begin{table*}[!htbp]
  \centering
  \caption{Recommendation performance on Netflix data (high reward trajectories). Best results per metric are \textbf{bolded}. Percentages indicate relative change from R-SFT baseline.}
  \label{tab:netflix_results}
  \begin{tabular}{lcccccc}
    \toprule
    Algorithm & NDCG@10 & NDCG@50 & HR@10 & HR@50 & MRR & Reward \\
    \midrule
    RSFT & 0.00\% & 0.00\% & 0.00\% & 0.00\% & 0.00\% & 0.00\% \\
    Exp-RSFT & \textbf{106.20\%} & \textbf{69.71\%} & \textbf{98.07\%} & \textbf{49.58\%} & \textbf{86.31\%} & 17.27\% \\
    DPO & -99.30\% & -96.77\% & -99.14\% & -95.52\% & -95.34\% & \textbf{19.21\%} \\
    PPO & -4.88\% & -24.01\% & -14.66\% & -35.71\% & -8.60\% & 13.39\% \\
    \bottomrule
  \end{tabular}
\end{table*}

\section{Limitations}
We consider the specific setting where users interact with only a small fraction of a large catalog and provide scalar rewards for the items they engage with. This setting is commonly found in large-scale industrial recommendation systems. However, in different settings, for example one where reward models generalize well and can reasonably score most counterfactual actions, or one where feedback is in the form of binary comparisons covering most of the item catalog, we expect standard algorithms like PPO or DPO to perform better.

\section*{Impact Statement}
This paper presents work whose goal is to advance the field of Machine Learning. There are many potential societal consequences of our work, none which we feel must be specifically highlighted here.

\bibliography{main}
\bibliographystyle{icml2026}

\newpage
\appendix
\onecolumn
\section{Proofs}

\subsection{Proposition \ref{prop:monotonic} Proof}
\label{subsec:monotonic-proof}

\begin{proof}
Fix state $s$. Define $\mathcal{A}_+ = \{a : \pi^*_{\lambda}(a|s) \geq \pi_\beta(a|s)\}$ and $\mathcal{A}_- = \{a : \pi^*_{\lambda}(a|s) < \pi_\beta(a|s)\}$. Since $\pi^*_{\lambda}$ upweights actions with higher rewards, there exists threshold $q(s)$ such that $r^*(s,a) \geq q(s)$ for all $a \in \mathcal{A}_+$ and $r^*(s,a) \leq q(s)$ for all $a \in \mathcal{A}_-$. Then:
\begin{align*}
V^{\pi^*_{\lambda}}(s) - V^{\pi_\beta}(s) &= \sum_{a \in \mathcal{A}_+} (\pi^*_{\lambda}(a|s) - \pi_\beta(a|s)) r^*(s,a) + \sum_{a \in \mathcal{A}_-} (\pi^*_{\lambda}(a|s) - \pi_\beta(a|s)) r^*(s,a) \\
&\geq q(s) \sum_{a \in \mathcal{A}_+} (\pi^*_{\lambda}(a|s) - \pi_\beta(a|s)) + q(s) \sum_{a \in \mathcal{A}_-} (\pi^*_{\lambda}(a|s) - \pi_\beta(a|s)) \\
&= q(s) \left( \sum_a \pi^*_{\lambda}(a|s) - \sum_a \pi_\beta(a|s) \right) = 0.
\end{align*}
\end{proof}

\subsection{Theorem~\ref{thm:noisy_improvement} Proof} \label{subsec:noisy-improvement-proof}

\begin{proof}
\textbf{Step 1:} First we derive a high probability bound for the error $\xi$. Since each $\xi(s,a)$ is $\sigma$-sub-Gaussian, for any $t > 0$ and any action $a \in \mathcal{A}$ and state $s \in \mathcal{S}$:
\[
\Pr(|\xi(s,a)| \geq t) \leq 2\exp\left(-\frac{t^2}{2\sigma^2}\right).
\]
Applying a union bound over all $|\mathcal{A}|$ actions:
\[
\Pr\left(\exists a \in \mathcal{A} : |\xi(s,a)| \geq t\right) \leq 2|\mathcal{A}|\exp\left(-\frac{t^2}{2\sigma^2}\right).
\]
Setting the right-hand side equal to $\delta$ and solving for $t$:
\[
2|\mathcal{A}|\exp\left(-\frac{t^2}{2\sigma^2}\right) = \delta \implies t = \sigma\sqrt{2\log(2|\mathcal{A}|/\delta)}.
\]
Define $\epsilon := \sigma\sqrt{2\log(2|\mathcal{A}|/\delta)}$. Then with probability at least $1-\delta$:
\[
|\xi(s,a)| \leq \epsilon \quad \forall a \in \mathcal{A}.
\]
We condition on this event for the remainder of the proof. Additionally, we fix a state $s \in \mathcal{S}$ and assume $\pi_\beta(a|s) > 0$ for all $a \in \mathcal{A}$.

\textbf{Step 2:} Now we can proceed to  compare $V^{\pi_\lambda}(s)$ and $V^{\pi_\beta}(s)$. $\pi_\lambda(a|s) \propto \pi_\beta(a|s)\exp(\hat{r}(s,a)/\lambda)$ is the exponential tilt using noisy rewards $\hat{r} = r^* + \xi$. From Section~\ref{sec:exp_rw}, we know that $\pi_\lambda(\cdot|s)$ is the unique maximizer of
\[
F(\pi) = \sum_a \pi(a|s)\,\hat{r}(s,a) - \lambda\, D_{\mathrm{KL}}\left( \pi(\cdot|s) \| \pi_\beta(\cdot|s) \right) 
\]
In particular, $F(\pi_\lambda) \geq F(\pi_\beta)$. Since $\mathrm{KL}(\pi_\beta \| \pi_\beta) = 0$, this gives:
\[
\sum_a \pi_\lambda(a|s)\,\hat{r}(s,a) - \lambda\, D_{\mathrm{KL}}(\pi_\lambda \| \pi_\beta) \geq \sum_a \pi_\beta(a|s)\,\hat{r}(s,a).
\]
Since $\lambda > 0$ and $\mathrm{KL}(\pi_\lambda \| \pi_\beta) \geq 0$, the left-hand side is at most $\sum_a \pi_\lambda(a|s)\,\hat{r}(s,a)$, so:
\begin{equation}\label{eq:dominance}
\sum_a \pi_\lambda(a|s)\,\hat{r}(s,a) \geq \sum_a \pi_\beta(a|s)\,\hat{r}(s,a).
\end{equation}
It remains to convert this inequality on noisy rewards into an inequality on true rewards. Since $|\xi(s,a)| \leq \epsilon$ for all $a$ and $\pi_\lambda(\cdot|s)$ is a probability distribution:
\begin{equation}\label{eq:noise1}
V^{\pi_\lambda}(s) = \sum_a \pi_\lambda(a|s)\,r^*(s,a) = \sum_a \pi_\lambda(a|s)\,(\hat{r}(s,a) - \xi(s,a)) \geq \sum_a \pi_\lambda(a|s)\,\hat{r}(s,a) - \epsilon.
\end{equation}
Similarly, since $\hat{r}(s,a) = r^*(s,a) + \xi(s,a) \geq r^*(s,a) - \epsilon$ for all $a$:
\begin{equation}\label{eq:noise2}
\sum_a \pi_\beta(a|s)\,\hat{r}(s,a) \geq \sum_a \pi_\beta(a|s)\,(r^*(s,a) - \epsilon) = V^{\pi_\beta}(s) - \epsilon.
\end{equation}
Chaining \eqref{eq:noise1}, \eqref{eq:dominance}, and \eqref{eq:noise2}:
\[
V^{\pi_\lambda}(s) \geq \sum_a \pi_\lambda(a|s)\,\hat{r}(s,a) - \epsilon \geq \sum_a \pi_\beta(a|s)\,\hat{r}(s,a) - \epsilon \geq V^{\pi_\beta}(s) - 2\epsilon.
\]
Substituting $\epsilon = \sigma\sqrt{2\log(2|\mathcal{A}|/\delta)}$ completes the proof.
\end{proof}

\subsection{Theorem~\ref{thm:temp_bound} Proof}
\label{subsec:temp-bound-proof}

Under the conditions of Theorem~\ref{thm:noisy_improvement}, if rewards are bounded $r^*(s,a) \in [0, R_{\max}]$, then with probability at least $1-\delta$:
\begin{equation}
    V^{\pi_\lambda}(s) \geq V^{\pi_\beta}(s) - R_{\max}\left(e^{2\epsilon/\lambda} - 1\right),
\end{equation}
where $\epsilon = \sigma\sqrt{2\log(2|\mathcal{A}|/\delta)}$. When $\lambda \geq 2\epsilon$, this simplifies to:
\begin{equation}
    V^{\pi_\lambda}(s) \geq V^{\pi_\beta}(s) - \frac{4R_{\max}\epsilon}{\lambda}.
\end{equation}

\begin{proof}
As before, let $\pi_\lambda^*(a|s) = \pi_\beta(a|s)\exp(r^*(s,a)/\lambda) / Z^*$ denote the exponential tilt under true rewards, and $\pi_\lambda(a|s) = \pi_\beta(a|s)\exp(\hat{r}(s,a)/\lambda) / Z$ the tilt under noisy rewards, where $Z^* = \sum_a \pi_\beta(a|s)\exp(r^*(s,a)/\lambda)$ and $Z = \sum_a \pi_\beta(a|s)\exp(\hat{r}(s,a)/\lambda)$ are the normalizing factors. As in Theorem~\ref{thm:noisy_improvement}, we condition on the event $|\xi(s,a)| \leq \epsilon$ for all $a \in \mathcal{A}$, which holds with probability at least $1 - \delta$ and fix the state $s \in \mathcal{S}$.

First, the value function can be decomposed as follows:
\begin{equation}\label{eq:decomp}
    V^{\pi_\lambda}(s) - V^{\pi_\beta}(s) = \underbrace{\left(V^{\pi_\lambda^*}(s) - V^{\pi_\beta}(s)\right)}_{\geq\, 0 \text{ by Proposition~\ref{prop:monotonic}}} - \underbrace{\left(V^{\pi_\lambda^*}(s) - V^{\pi_\lambda}(s)\right)}_{\text{noise difference}}.
\end{equation}
Now, we just need to bound the difference between the noisy and noise-free policy values. Since $r^*(s,a) \in [0, R_{\max}]$:
\begin{align}
    \left|V^{\pi_\lambda^*}(s) - V^{\pi_\lambda}(s)\right| 
    &= \left|\sum_a \left(\pi_\lambda^*(a|s) - \pi_\lambda(a|s)\right) r^*(s,a)\right| \nonumber \\
    &\leq R_{\max} \sum_a \left|\pi_\lambda^*(a|s) - \pi_\lambda(a|s)\right| \nonumber \\
    &= 2R_{\max}\,  D_{\mathrm{TV}}\!\left(\pi_\lambda^*(\cdot|s),\, \pi_\lambda(\cdot|s)\right). \label{eq:tv_reduction}
\end{align}

Define the noise weight $w(a) = \exp(\xi(s,a)/\lambda)$. Since $|\xi(s,a)| \leq \epsilon$:
\begin{equation}\label{eq:w_bounds}
    e^{-\epsilon/\lambda} \leq w(a) \leq e^{\epsilon/\lambda} \quad \forall\, a \in \mathcal{A}.
\end{equation}

Now we relate the noisy and clean policies. Since $\hat{r}(s,a) = r^*(s,a) + \xi(s,a)$, the noisy partition function factorizes as:
\begin{align}
    Z &= \sum_a \pi_\beta(a|s)\exp\!\left(\frac{\hat{r}(s,a)}{\lambda}\right) 
    = \sum_a \pi_\beta(a|s)\exp\!\left(\frac{r^*(s,a)}{\lambda}\right)\exp\!\left(\frac{\xi(s,a)}{\lambda}\right) \nonumber \\
    &= \sum_a \pi_\beta(a|s)\exp\!\left(\frac{r^*(s,a)}{\lambda}\right) w(a) 
    = Z^* \sum_a \pi_\lambda^*(a|s)\, w(a), \label{eq:Z_factor}
\end{align}
where the last step uses $\pi_\lambda^*(a|s) = \pi_\beta(a|s)\exp(r^*(s,a)/\lambda) / Z^*$. Dividing both sides by $Z^*$:
\begin{equation}\label{eq:Z_ratio}
    \frac{Z}{Z^*} = \sum_a \pi_\lambda^*(a|s)\, w(a),
\end{equation}

Therefore, the partition function ratio is simply the expected noise weight under the clean policy. Combined with \eqref{eq:w_bounds}, this gives $e^{-\epsilon/\lambda} \leq Z/Z^* \leq e^{\epsilon/\lambda}$, and therefore:
\begin{equation}\label{eq:rho_bounds}
    e^{-\epsilon/\lambda} \leq \rho \leq e^{\epsilon/\lambda}, \quad \text{where } \rho := Z^*/Z.
\end{equation}

For any action $a$, the policy ratio between the noisy and clean policies then follows directly:
\begin{equation}\label{eq:ratio}
    \frac{\pi_\lambda(a|s)}{\pi_\lambda^*(a|s)} = \frac{\pi_\beta(a|s)\exp(\hat{r}(s,a)/\lambda) / Z}{\pi_\beta(a|s)\exp(r^*(s,a)/\lambda) / Z^*} = w(a) \cdot \frac{Z^*}{Z}.
\end{equation}

Substituting \eqref{eq:ratio} into the total variation:
\begin{align}
    2\,D_{\mathrm{TV}}\!\left(\pi_\lambda^*,\, \pi_\lambda\right) 
    &= \sum_a \pi_\lambda^*(a|s)\left|w(a)\,\rho - 1\right|. \label{eq:tv_expr}
\end{align}
From \eqref{eq:w_bounds} and \eqref{eq:rho_bounds}, the product $w(a)\,\rho$ lies in $[e^{-2\epsilon/\lambda},\, e^{2\epsilon/\lambda}]$, so:
\begin{equation}
    |w(a)\,\rho - 1| \leq \max\!\left(e^{2\epsilon/\lambda} - 1,\; 1 - e^{-2\epsilon/\lambda}\right) = e^{2\epsilon/\lambda} - 1,
\end{equation}
where the equality holds because $e^x - 1 \geq 1 - e^{-x}$ for $x \geq 0$. Substituting back into \eqref{eq:tv_expr}:
\begin{equation}\label{eq:tv_bound}
    D_{\mathrm{TV}}\!\left(\pi_\lambda^*(\cdot|s),\, \pi_\lambda(\cdot|s)\right) \leq \frac{e^{2\epsilon/\lambda} - 1}{2}.
\end{equation}

Substituting \eqref{eq:tv_bound} into \eqref{eq:tv_reduction} and then into \eqref{eq:decomp}:
\begin{equation}
    V^{\pi_\lambda}(s) \geq V^{\pi_\beta}(s) - R_{\max}\!\left(e^{2\epsilon/\lambda} - 1\right).
\end{equation}

When $\lambda \geq 2\epsilon$, we have $2\epsilon/\lambda \leq 1$, and $e^x - 1 \leq 2x$ for $x \in [0,1]$, giving:
\begin{equation}
    V^{\pi_\lambda}(s) \geq V^{\pi_\beta}(s) - \frac{4R_{\max}\,\epsilon}{\lambda}.
\end{equation}
Substituting $\epsilon = \sigma\sqrt{2\log(2|\mathcal{A}|/\delta)}$ completes the proof.
\end{proof}

\section{Experiments}
\subsection{Data Distribution}
The following chart shows the rating distribution across the datasets:
\begin{figure}[H]
  \vskip 0.2in
  \begin{center}    \centerline{\includegraphics[width=3.5in]{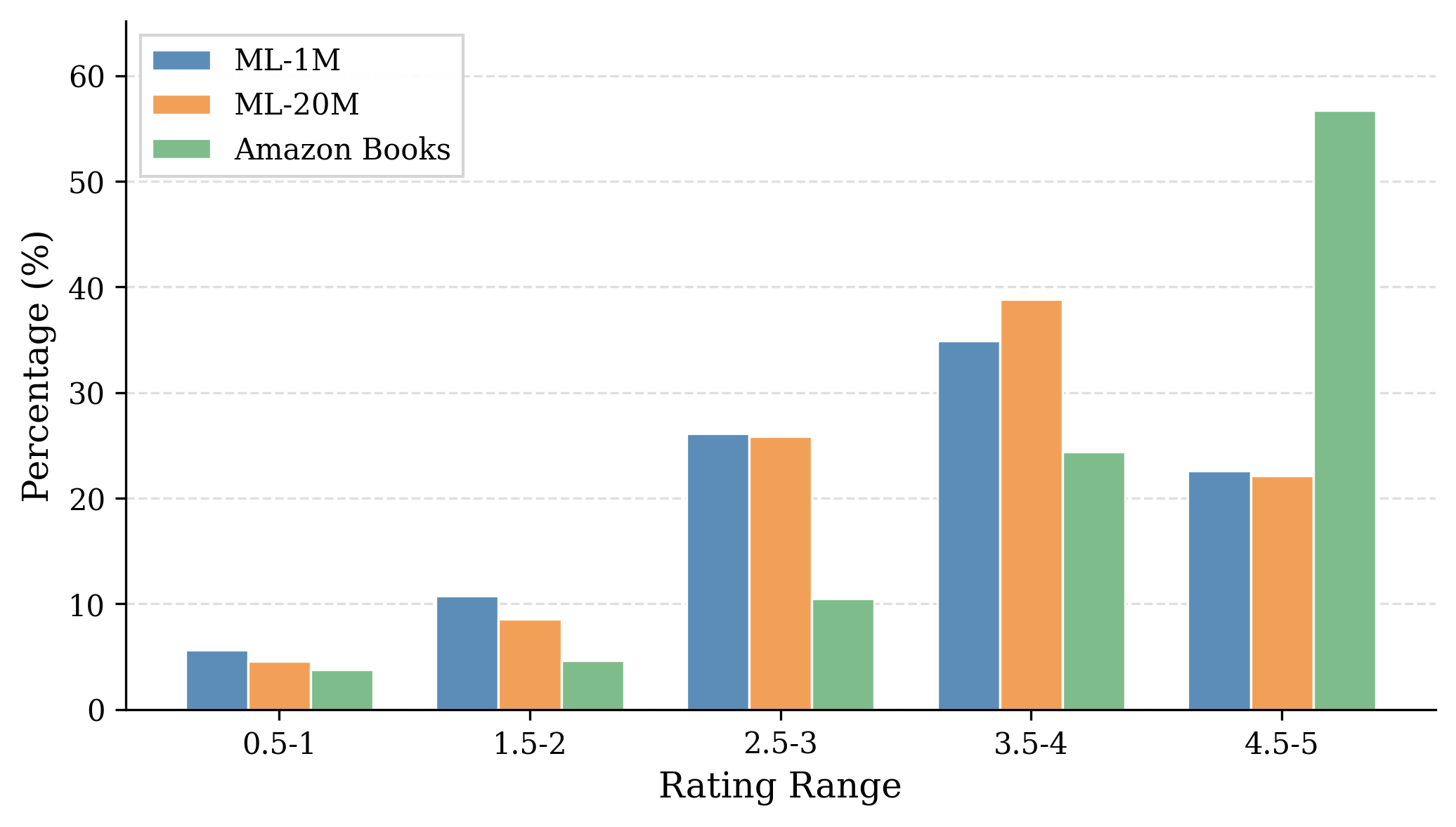}}
    \caption{
      Distribution of ratings in all three test datasets.
    }
    \label{fig:rating_dist}
  \end{center}
  \vskip -0.2in
\end{figure}
\afterpage{
\subsection{Additional Results} \label{app:metric_evolution}
\subsubsection{Metric Evolution for Exp-RSFT and RSFT}
These plots plot just Exp-RSFT and RSFT highlighting the superior performance of Exp-RSFT across all epochs for all datasets. The metrics at epoch 0 is the same as the metrics for the Behavior Cloning baseline.
\begin{figure*}[h]
  \vskip 0.2in
  \begin{center}    \centerline{\includegraphics[width=6.0in]{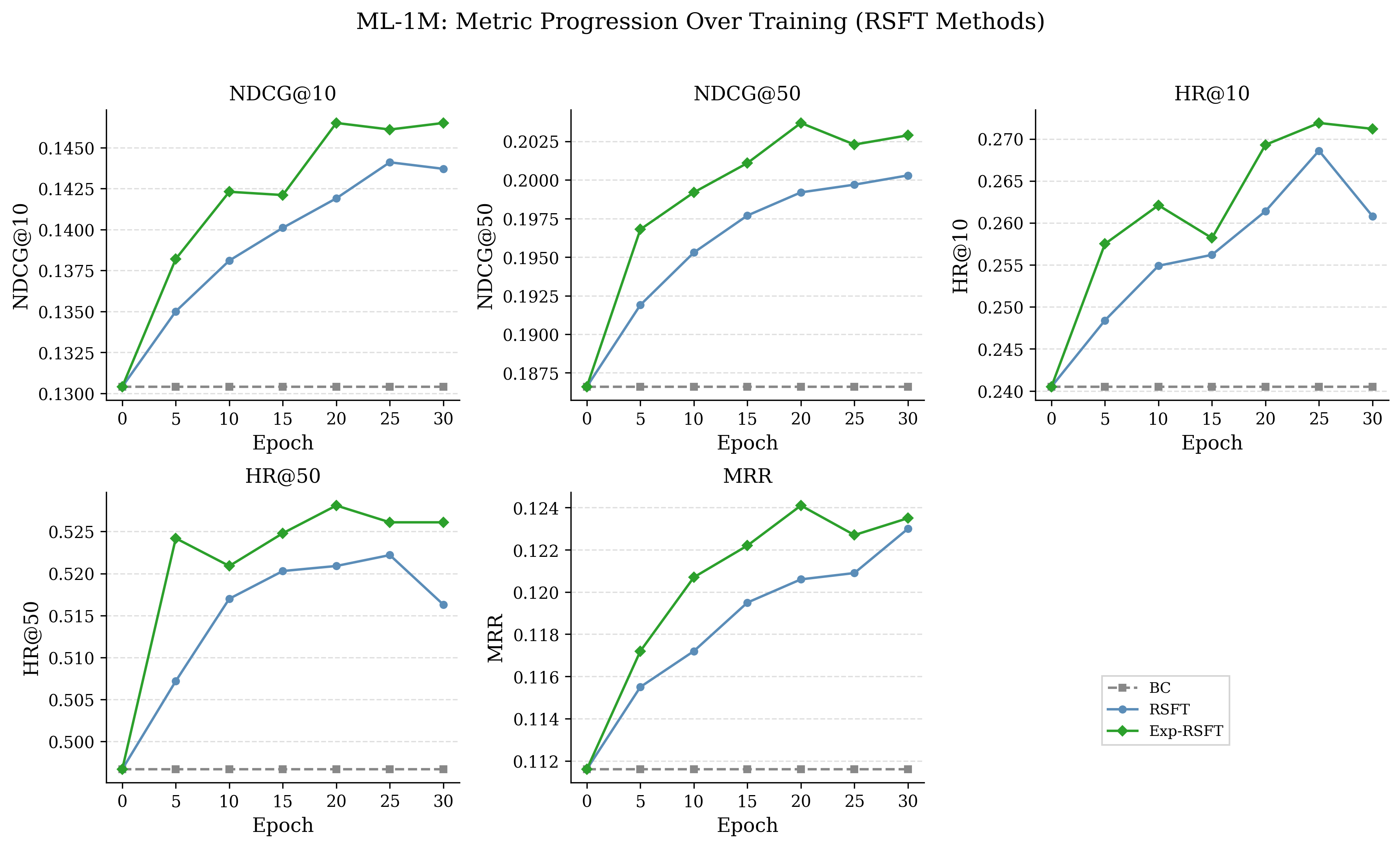}}
    \caption{
      Reward weighting versus exponential reward weighting for ML-1M.
    }
    \label{fig:metrics_zoom_ml-1m}
  \end{center}
  \vskip -0.2in
\end{figure*}
\begin{figure*}[h]
  \vskip 0.2in
  \begin{center}    \centerline{\includegraphics[width=6.0in]{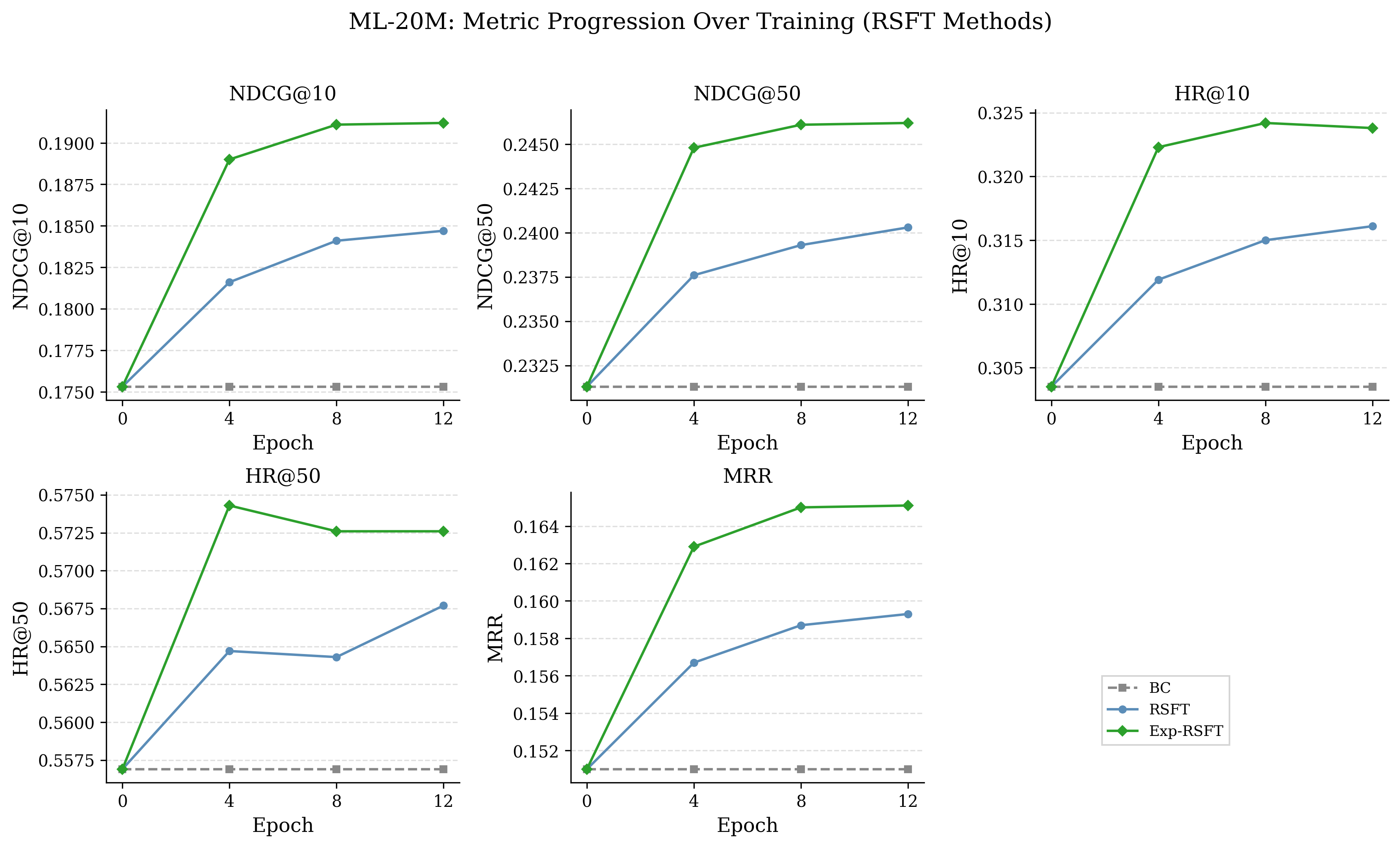}}
    \caption{
      Reward weighting versus exponential reward weighting for ML-20M.
    }
    \label{fig:metrics_zoom_ml-20m}
  \end{center}
  \vskip -0.2in
\end{figure*}
\begin{figure*}[h]
  \vskip 0.2in
  \begin{center}    \centerline{\includegraphics[width=6.0in]{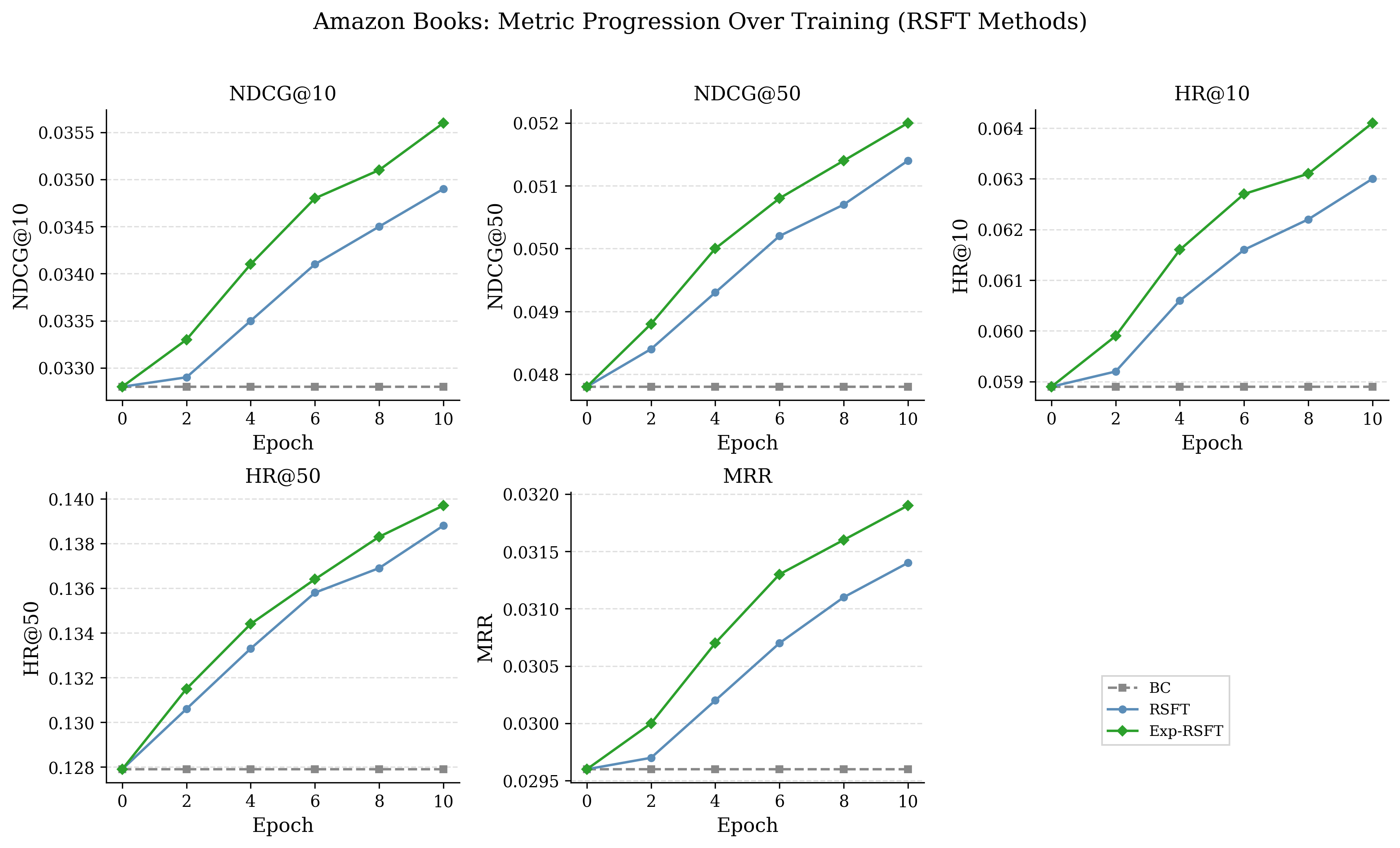}}
    \caption{
      Reward weighting versus exponential reward weighting for Amazon Books.
    }
    \label{fig:metrics_zoom_amazon-books}
  \end{center}
  \vskip -0.2in
\end{figure*}
\clearpage
}
\afterpage{
\subsubsection{Metric Evolution for All Algorithms}
These plots show the evolution of all metrics across all epochs for all datasets. Again, the metrics at epoch 0 is the same as the metrics for the Behavior Cloning baseline. We see that both PPO and DPO catastrophically collapse fairly early due to reward-model over-optimization.
\begin{figure*}[h]
  \vskip 0.2in
  \begin{center}    \centerline{\includegraphics[width=6.0in]{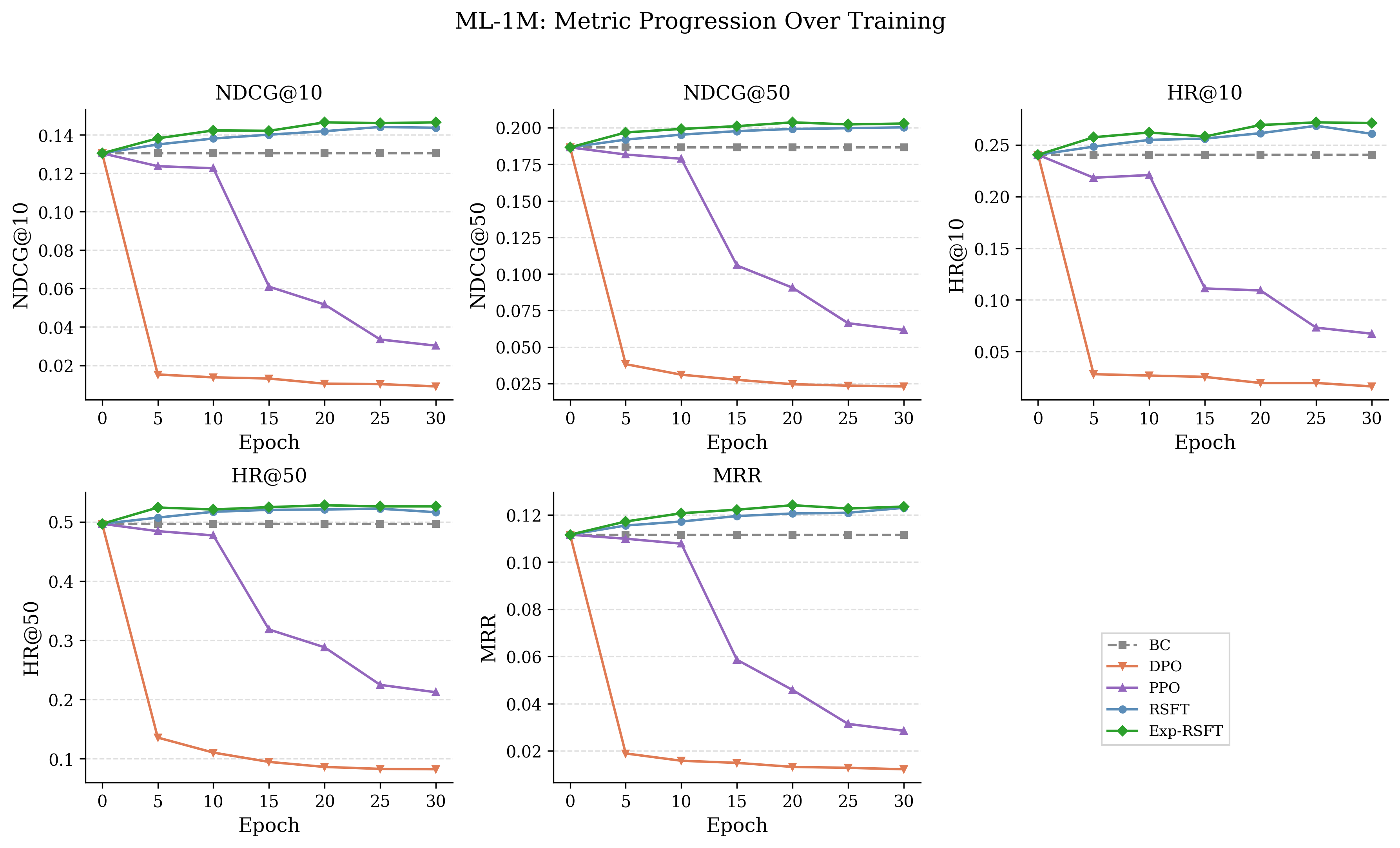}}
    \caption{
      All metrics for ML-1M.
    }
    \label{fig:metrics_all_ml-1m}
  \end{center}
  \vskip -0.2in
\end{figure*}
\begin{figure*}[h]
  \vskip 0.2in
  \begin{center}    \centerline{\includegraphics[width=6.0in]{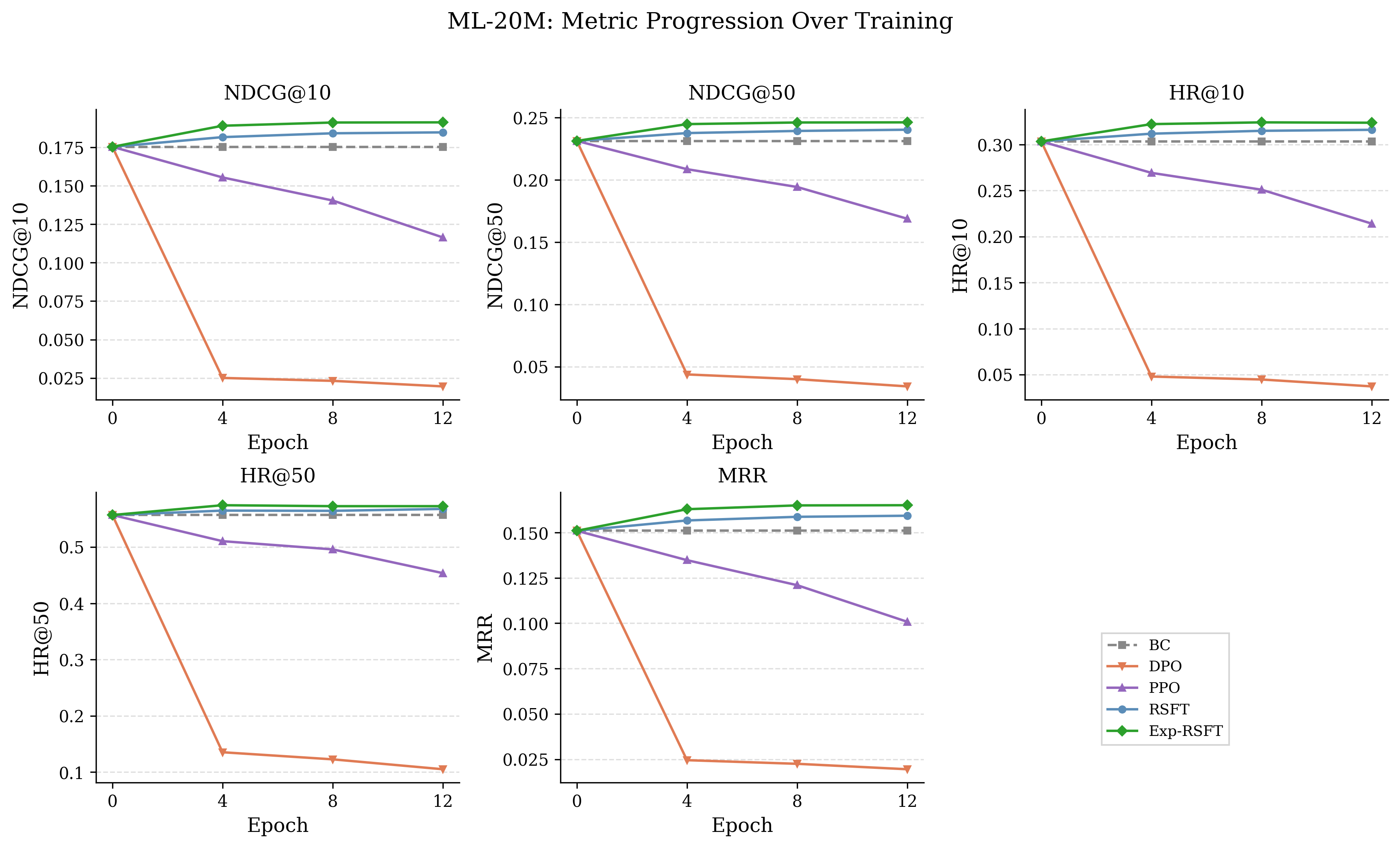}}
    \caption{
      All metrics for ML-20M.
    }
    \label{fig:metrics_all_ml-20m}
  \end{center}
  \vskip -0.2in
\end{figure*}
\begin{figure*}[h]
  \vskip 0.2in
  \begin{center}    \centerline{\includegraphics[width=6.0in]{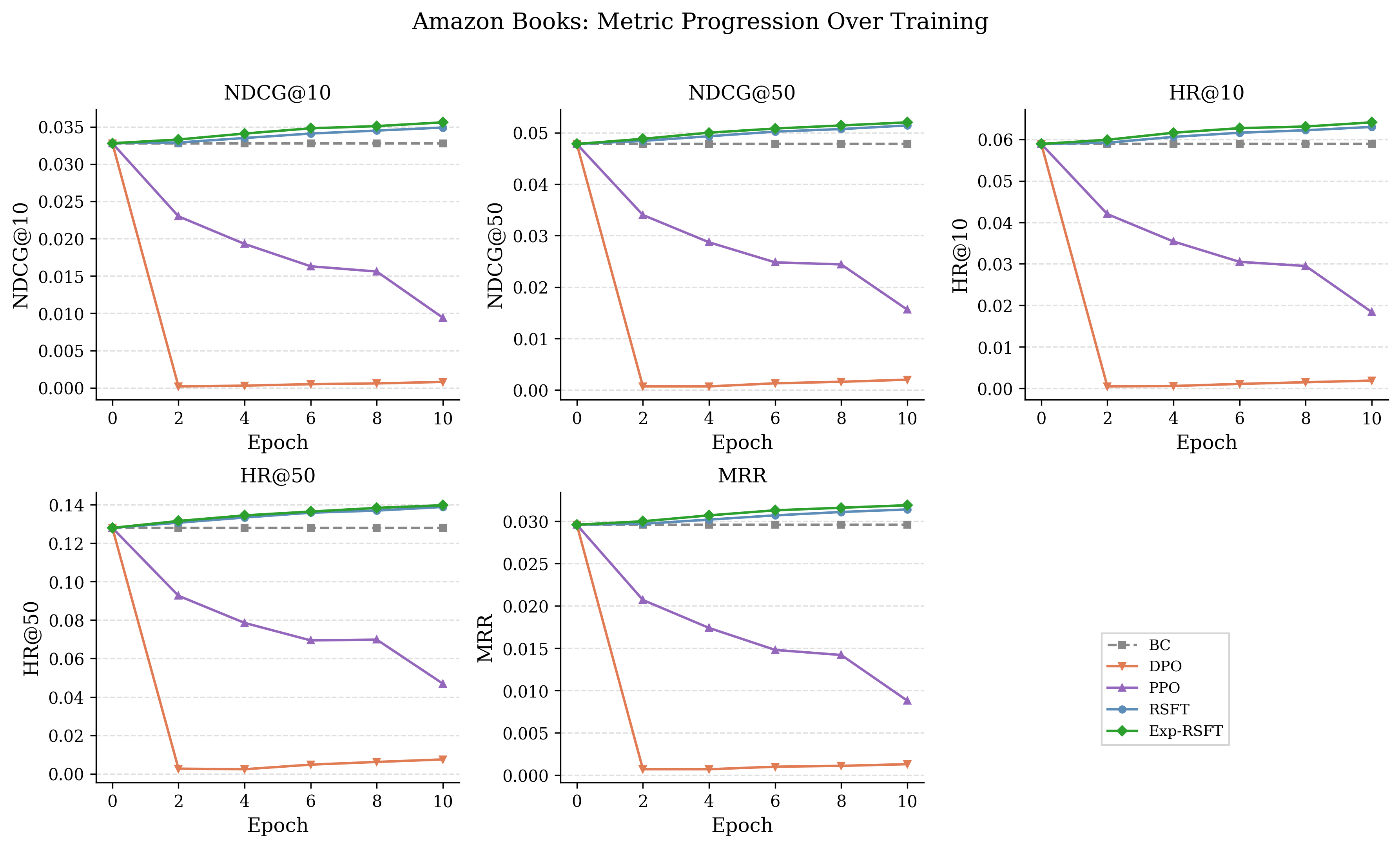}}
    \caption{
      All metrics for Amazon Books.
    }
    \label{fig:metrics_all_amazon-books}
  \end{center}
  \vskip -0.2in
\end{figure*}
}
\newpage
\afterpage{
\subsubsection{Performance-Robustness Trade-off}\label{app:lambda-sweep}
\begin{figure*}[h]
  \vskip 0.2in
  \begin{center}    \centerline{\includegraphics[width=6.0in]{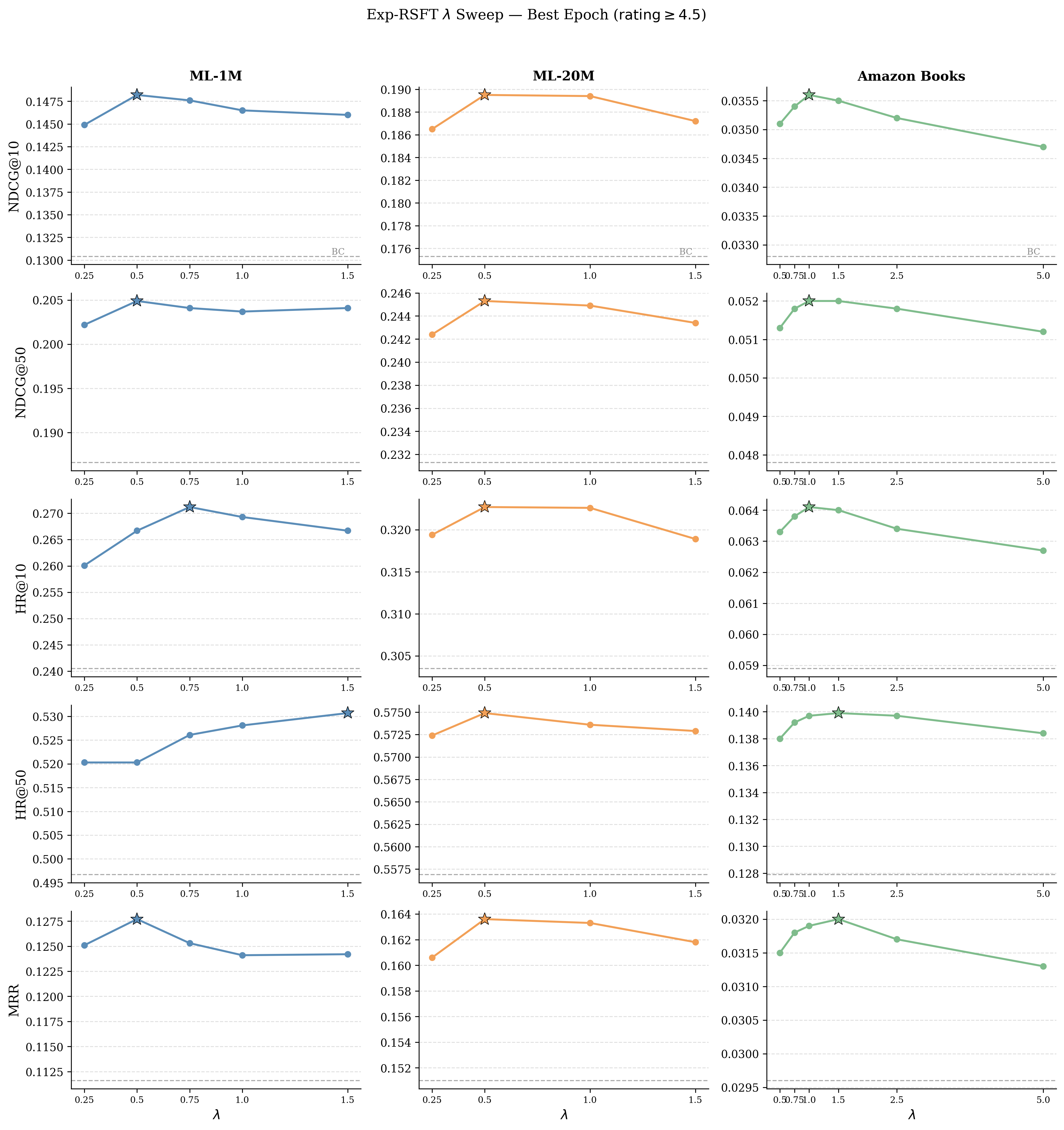}}
    \caption{
      Best metric values for all metrics for different $\lambda$.
    }
    \label{fig:lambda-sweep-best}
  \end{center}
  \vskip -0.2in
\end{figure*}
\begin{figure*}[ht]
  \vskip 0.2in
  \begin{center}    \centerline{\includegraphics[width=6.0in]{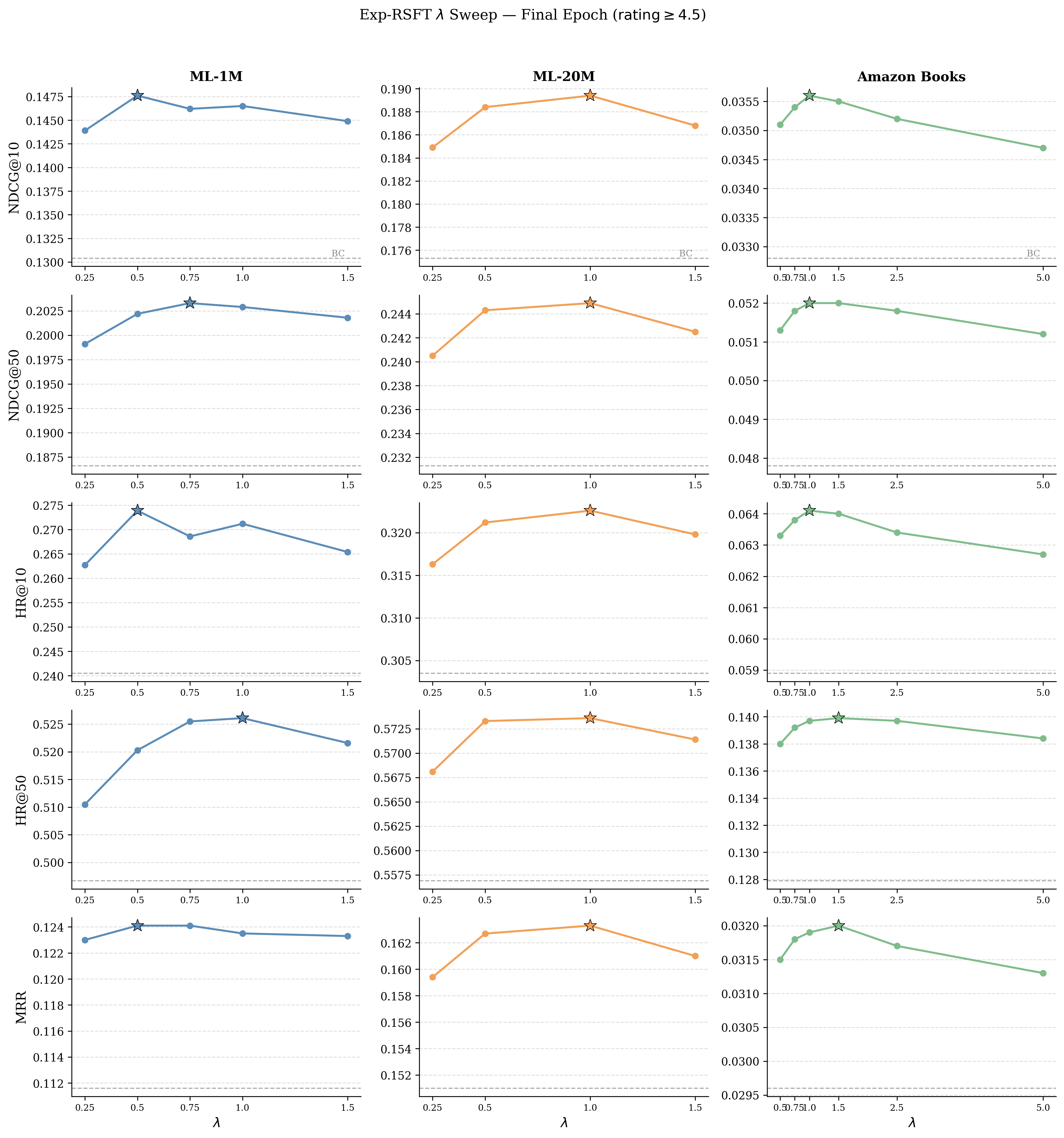}}
    \caption{
      Final epoch metric values for all metrics for different $\lambda$.
    }
    \label{fig:lambda-sweep-last}
  \end{center}
  \vskip -0.2in
\end{figure*}
}

\end{document}